\titlespacing*{\section}{0pt}{0.75\baselineskip}{0.5\baselineskip}
\titlespacing*{\subsection}{0pt}{0.5\baselineskip}{0.25\baselineskip}
\titlespacing*{\subsubsection}{0pt}{0.25\baselineskip}{0.1\baselineskip}
\theoremstyle{plain}
\newtheorem{theorem}{Theorem}[section]
\newtheorem{lemma}[theorem]{Lemma}
\theoremstyle{definition}
\newtheorem{definition}[theorem]{Definition}
\theoremstyle{remark}
\title{Exploration by Random Reward Perturbation}
\author{Haozhe Ma$^{1}$, Guoji Fu$^{1}$, Zhengding Luo$^{2}$, Jiele Wu$^{1}$, Tze-Yun Leong$^{1}$\\[0.3cm] 
$^1$National University of Singapore\\ 
$^2$Nanyang Technological University\\[0.3cm] 
\texttt{haozhe.ma@u.nus.edu}, \texttt{guoji.fu@u.nus.edu},\texttt{luoz0021@e.ntu.edu.sg}, \\
\texttt{wujiele@comp.nus.edu.sg}, \texttt{leongty@nus.edu.sg}
}
\date{}
\begin{document}
\maketitle

\begin{abstract}
    
    \noindent We introduce Random Reward Perturbation (RRP), a novel exploration strategy for reinforcement learning (RL). Our theoretical analyses demonstrate that adding zero-mean noise to environmental rewards effectively enhances policy diversity during training, thereby expanding the range of exploration. RRP is fully compatible with the action-perturbation-based exploration strategies, such as $\epsilon$-greedy, stochastic policies, and entropy regularization, providing additive improvements to exploration effects. It is general, lightweight, and can be integrated into existing RL algorithms with minimal implementation effort and negligible computational overhead. RRP establishes a theoretical connection between reward shaping and noise-driven exploration, highlighting their complementary potential. Experiments show that RRP significantly boosts the performance of Proximal Policy Optimization and Soft Actor-Critic, achieving higher sample efficiency and escaping local optima across various tasks, under both sparse and dense reward scenarios.

\end{abstract}

\section{Introduction}

Model-free reinforcement learning (MFRL) directly optimizes agents to maximize the expected returns without explicitly modeling the environment's dynamics. This paradigm aligns with human learning patterns in real-world tasks that are initially unfamiliar, as individuals refine their behaviors through trial and error, guided by feedback from the world. As a result, efficiently exploring diverse scenarios, actions, and outcomes becomes one of the most crucial aspects of MFRL, particularly in sparse reward environments where feedback is delayed or rare.

A wealth of research introduces randomness into agent-environment interactions to induce uncertain behaviors, such as randomly selecting actions with the $\epsilon$-greedy strategy~\citep{dqn:mnih2015human,ddpg:lillicrap2015continuous,td3:fujimoto2018addressing}, or parameterizing policies as stochastic distributions and sampling actions from them to avoid deterministic outputs~\citep{entropy:nachum2017bridging,entropy:haarnoja2017reinforcement,sac:haarnoja2018soft}. These methods are simple yet effective, providing controllable transitions from exploration to exploitation by adjusting the magnitude of randomness. However, due to the lack of directional guidance, they often struggle with extremely long-horizon tasks or extremely sparse reward settings.

Another research theme leverages reward-shaping (RS) techniques by augmenting environmental rewards with additional \textit{exploration bonuses}, also referred to as \textit{intrinsic motivations}, to explicitly encourage agents to discover novel states or actions. These shaped rewards can be derived in various ways, such as pseudo-visit-counts~\citep{count:lobel2023flipping,novelty:zhang2021noveld,novelty:ostrovski2017count,count:bellemare2016unifying}, prediction errors~\citep{diversity:badia2020never,curiosity:burda2019large,direction:fox2018dora}, novelty scores~\citep{novelty:drnd-yang2024exploration,novelty:rnd-burda2018exploration,novelty:ExploRS-devidze2022exploration}, or information gain~\citep{direction:raileanu2020ride,direction:houthooft2016vime}. While effective in sparse-reward and complex tasks, two main challenges arise. First, novelty is not inherently synonymous with value; overemphasizing it can lead agents to pursue novel yet irrelevant states, detracting from their primary learning objectives. Second, balancing shaped rewards with original rewards often relies on prior knowledge or manual tuning, which can limit adaptability.

We propose a novel and simple approach, exploration by \textbf{R}andom \textbf{R}eward \textbf{P}erturbation (\textbf{RRP}), which adopts a reward-shaping paradigm via introducing Gaussian noise into environmental rewards to enhance exploration. RRP perturbs the feedback signals received by the agent, guiding its optimization toward diverse directions, thus achieving broader exploration in a lightweight and scalable manner. The noise is gradually annealed during training to ensure a smooth transition from exploration to exploitation. Our motivation stems from two key observations: First, perturbing reward signals for exploration remains underexplored, with limited theoretical research and support, which represents a gap that RRP seeks to fill. Second, for intrinsic motivation based RS methods, we observe two practical phenomena: (1) they typically require additional models, computation overheads, or optimization processes to evaluate state novelty or prediction errors; (2) during early learning stages, when exploration is dominant, most states tend to appear equally novel to the agent's lack of experience, and prediction errors are largely random from untrained models. Motivated by these insights, we simplify the heuristic that \textit{novel states deserve higher rewards} to the more general idea that \textit{any state can be randomly rewarded}, aiming to retain competitive exploration benefits while significantly reducing computational complexity.

RRP can be incorporated into existing MFRL algorithms with minimal implementation effort, only requiring simple modifications to the reward function. We integrate RRP into two widely-used algorithms: Proximal Policy Optimization (PPO)~\citep{ppo:schulman2017proximal} and Soft Actor-Critic (SAC)~\citep{sac:haarnoja2018soft}, and evaluate their performance across multiple domains, including \textit{MuJoCo}, \textit{Mobile Manipulator}, and \textit{Dexterous Hand}, under both sparse and dense reward settings. Experiments show that RRP-PPO and RRP-SAC consistently outperform their vanilla counterparts and achieve competitive performance compared to other exploration approaches. By enhancing exploration, RRP enables agents to escape local optima in several tasks, improving sample efficiency and convergence speed.

\section{Preliminaries}

\textbf{Markov Decision Process (MDP)} is the fundamental mathematical model for formalizing RL problems, denoted by a tuple $\langle \mathcal{S}, \mathcal{A}, T, R, \gamma \rangle$, where $\mathcal{S}$ is the state space, $\mathcal{A}$ is the action space, $T(s'|s,a)$ is the transition probability from state $s$ to $s'$ given action $a$, $R(\cdot)$ is the environmental reward function, and $\gamma \in [0,1)$ is the discount factor. The objective of the agent is to learn a policy $\pi(a|s)$ that maximizes the expected cumulative reward $\mathbb{E}[\sum_{t=0}^\infty \gamma^t R(s_t, a_t)]$, where $s_t \sim T(\cdot|s_{t-1}, a_{t-1})$ and $a_t \sim \pi(\cdot|s_t)$.

\textbf{Model-Free Reinforcement Learning (MFRL)} bypasses explicit transition modeling and directly optimizes either the policy or the value function. MFRL methods are broadly categorized into two branches: \textit{On-policy} methods, such as REINFORCE~\citep{reinforce:sutton1999policy}, A2C~\citep{a3c:mnih2016asynchronous}, and PPO~\citep{ppo:schulman2017proximal}, update the policy using trajectories sampled by the current policy; and \textit{off-policy} methods, such as DQN~\citep{dqn:mnih2015human} and SAC~\citep{sac:haarnoja2018soft}, use a replay buffer to store past experiences and optimize the policy using historical data.

\textbf{Reward Shaping} is a technique that modifies the reward function to facilitate learning. It can be expressed as $R'(s,a) = R(s,a) + \Delta R(s,a)$, where $\Delta R(s,a)$ is the additional shaping reward. The goal is to design $\Delta R(s,a)$ such that it does not alter the optimal policy but accelerates learning by providing more informative feedback~\citep{lidayan2025bamdp,ibrahim2024comprehensive,durnd:ma2025catching,relara:ma2024reward,cenra:ma2024knowledge}.

\section{Methodology}

\textbf{R}andom \textbf{R}eward \textbf{P}erturbation (\textbf{RRP}) injects a noise term into the original reward signal received by the agent. Specifically, we use a zero-mean Gaussian noise $\mathcal{N}(0, \sigma^2)$, where $\sigma$ is the standard deviation and also serves as a hyperparameter to control the magnitude of the perturbation. Denote the environmental reward function as $R^{\text{env}}(s)$, the perturbed reward function is defined as:
\begin{equation}
    R^{\text{RRP}}(s) = R^{\text{env}}(s) + \varepsilon, \quad \varepsilon \sim \mathcal{N}(0, \sigma^2).
\end{equation}
By perturbing rewards, the policy is optimized toward more random directions during learning. This simple yet effective strategy allows the policy to explore diverse behaviors, uncovering more potentially promising options and thus expanding the exploration to cover a wider variety of states.

The perturbation is primarily beneficial for the exploration stage. To enable the agent to gradually refocus on the original objectives defined by the environmental reward, the noise scale is annealed over time, which is achieved by linearly decaying the standard deviation $\sigma$ as follows:
\begin{equation}
    \sigma(t) = \max \{0, \sigma_{\text{max}} - (\sigma_{\text{max}} - \sigma_{\text{min}})t/T \},
\end{equation}
where $\sigma_{\text{max}}$ and $\sigma_{\text{min}}$ are the initial and final standard deviations, $T$ is the training steps over which the noise is decayed. This scheduling mechanism ensures a smooth transition from exploration to exploitation. When $\sigma_{\text{min}} = 0$, the noise is fully eliminated, recovering the original reward signal. 

\subsection{How RRP Works}

\begin{figure*}
    \centering
    \hfill
    \begin{subfigure}[b]{0.47\textwidth}
        \includegraphics[width=\textwidth]{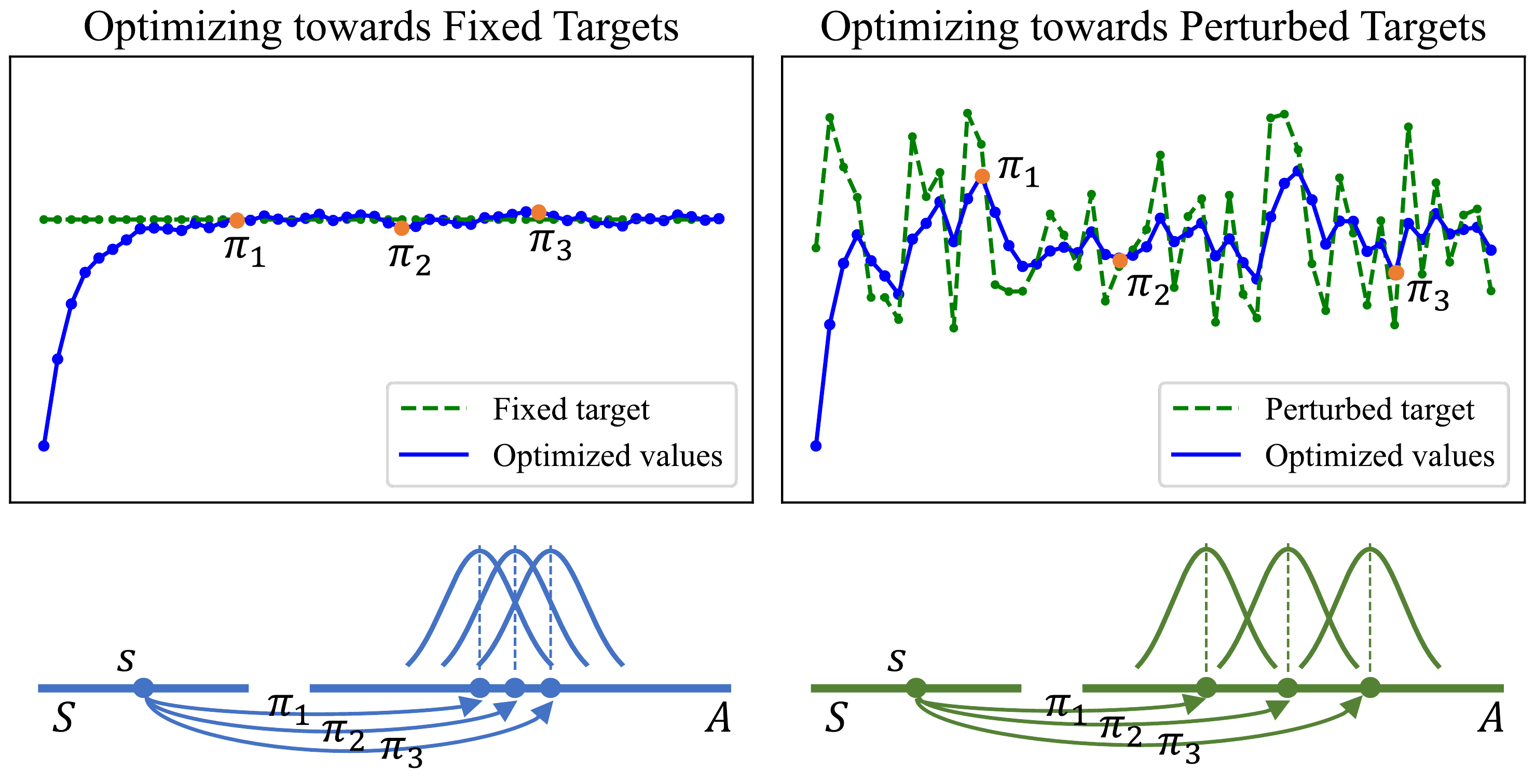}
        \caption{RRP extends policy and action diversity.}
        \label{fig:explanation-1}
    \end{subfigure}
    \hfill
    \begin{subfigure}[b]{0.52\textwidth}
        \includegraphics[width=\textwidth]{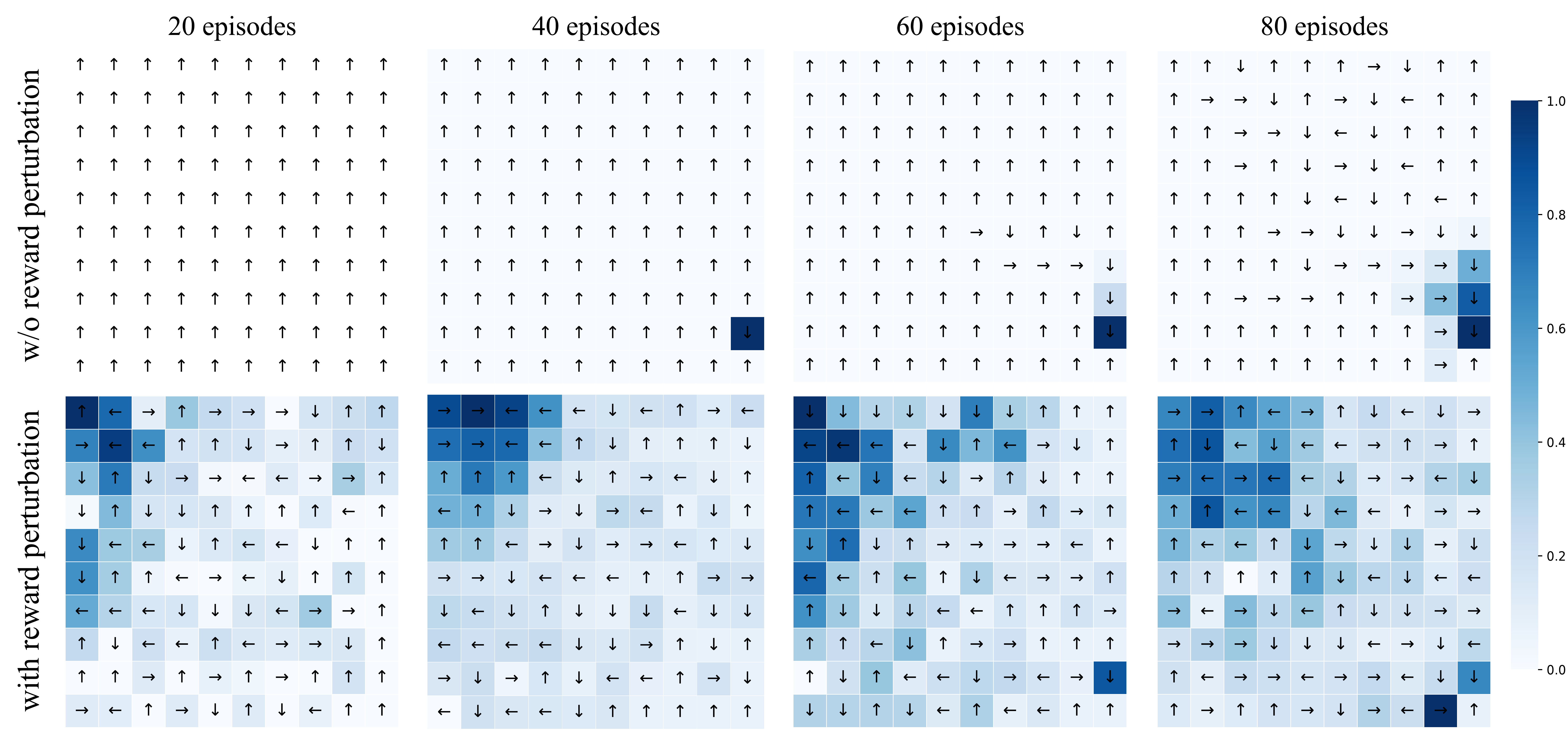}
        \caption{RRP in sparse-reward environments.}
        \label{fig:explanation-3}
    \end{subfigure}
    \hfill
    \caption{\textbf{How RRP works in exploration.} In Figure (a), by ``chasing" a perturbed target, the policy's optimization is guided toward random directions, increasing the diversity of learned policies. Combined with the effects of stochastic policies, RRP further expands the range of sampled actions, thereby improving exploration. In Figure (b), in a grid-maze environment with sparse rewards, where only the grid in the bottom-right corner is rewarded as $+1$. The top and bottom rows respectively show the values of tabular Q-learning using the original rewards and randomly perturbed rewards, and the arrows indicate the actions corresponding to the highest Q-values, visualized every $20$ training episodes. RRP clearly increases the diversity of actions selected across the grid world, while the sparse rewards trap the agent in repetitive behaviors.}
    \label{fig:explanation}
\end{figure*}

Intuitively, the effectiveness of RRP to improve exploration can be understood from three perspectives:

\textbf{Enhancing policy diversity.} The (cumulative) rewards are the objective for RL policies to maximize, either directly or indirectly~\citep{rl:sutton2018reinforcement}. Perturbing the reward signal introduces slight random shifts in the policy's optimization directions, akin to ``chasing" a moving target, which increases the diversity of policy outputs during learning. At a high level, reward perturbation inherently explores more possible policy mappings from the state space to the action space. This effect complements the commonly used stochastic policies or action-sampling exploration strategies, where uncertainty is introduced at the output level under a fixed policy. The combined effect of reward perturbation in increasing policy diversity and its synergy with action sampling is schematically illustrated in Figure~\ref{fig:explanation-1}.

\textbf{Effectiveness in sparse-reward environments.} RRP is particularly effective in sparse-reward environments, where most rewards are zero, lacking immediate and informative feedback. By perturbing the rewards, the agent can explore alternative and potentially more reasonable reward structures. For instance, in temporal-difference (TD) methods, the original TD-target is $r^{\text{env}} + \gamma V'$. In sparse-reward scenarios, where $r^{\text{env}}$ is predominantly zero, this TD-target risks driving the policy toward premature convergence in local traps. In contrast, perturbing the rewards modifies the TD-target to $r^{\text{env}} + \varepsilon + \gamma V'$. Although the noise term $\varepsilon$ carries no intrinsic meaning, it effectively prevents premature convergence. Figure~\ref{fig:explanation-3} provides an illustrative example. 

\textbf{From the reward-shaping perspective.} Reward-shaping techniques often leverage novelty bonuses to guide agents toward less frequently visited states~\citep{count:bellemare2016unifying,count:tang2017exploration,novelty:rnd-burda2018exploration}. RRP can be seen as a relaxation of the assumption that \textit{only novel states require higher rewards} to a broader notion that \textit{any state can be randomly assigned a reward}, providing two key advantages. First, in the early learning stages, the novelty differences among states are minimal, random perturbations can achieve comparable exploration outcomes. Second, RRP significantly reduces computational overhead by eliminating the need for additional calculations to evaluate state novelty, offering a simpler and more efficient alternative.

\subsection{Theoretical Analysis of Exploration via RRP}

Exploration in RL is closely tied to the variance of an agent's behaviors and state trajectories, with higher variance indicating broader exploration~\citep{survey:ladosz2022exploration}. This section presents a theoretical analysis demonstrating how reward perturbation by Gaussian noise increases the variance of policy outputs and state visits. The proofs are structured as follows: we first define and analyze the variance of model outputs in the context of stochastic gradient descent (SGD) optimization, and then conclude with theorems that apply these findings to two representative RL algorithms: Deep Q-Network (DQN)~\citep{dqn:mnih2015human} and Advantage Actor-Critic (A2C)~\citep{a3c:mnih2016asynchronous}.

\begin{definition}
\label{def:var-model-output}
Let $f_{\theta}: \mathcal{X} \to \mathbb{R}^m$ represent a function parameterized by $\theta \in \mathbb{R}^p$. Given a set of data $\{ x_i \}_{i=1}^N \in \mathcal{X}$, the \textit{variance of the model outputs} is defined as the trace of the covariance matrix of the mapping outputs:
\begin{equation}
    \setlength{\abovedisplayskip}{5pt} \setlength{\belowdisplayskip}{5pt}
    \mathcal{V}(f_{\theta}) = \operatorname{Tr}(C) \quad C = \frac{1}{N}\sum_{i=1}^N \big(f_{\theta}(x_i) - \bar{f} \big) \big(f_{\theta}(x_i) - \bar{f} \big)^T,
\end{equation}
where $\bar{f} = \frac{1}{N} \sum_{i=1}^N f_{\theta}(x_i)$.
\end{definition}
\begin{lemma}
\label{lem:noise-model-variance}
Given an initial model $f_{\theta_0}: \mathcal{X} \to \mathbb{R}^m$ and a dataset $\{(x_i, y_i)\}_{i=1}^N$, where $x_i \in \mathcal{X}$ and $y_i \in \mathbb{R}^m$ are inputs and labels. Consider two training scenarios:
\begin{enumerate}[label=(\arabic*),topsep=0pt,itemsep=2pt]
    \item Training with the original labels $y_i$.
    \item Training with Gaussian noise perturbed labels $y_i + \varepsilon_i$, where $\varepsilon_i \sim \mathcal{N}(0, \sigma^2 I_m)$.
\end{enumerate}
After one step of SGD, the following properties hold:
\begin{enumerate}[label=(\roman*),topsep=0pt,itemsep=2pt]
    \item $\mathcal{V}(f_{\theta_1}^{(2)}) > \mathcal{V}(f_{\theta_1}^{(1)})$: Adding Gaussian noise to the labels increases the variance of the model outputs.
    \item $\mathbb{E}[\bar{f}_{\theta_1}^{(2)}] = \bar{f}_{\theta_1}^{(1)}$: The expectation of the mean of the model outputs remains unchanged.
\end{enumerate}
\end{lemma}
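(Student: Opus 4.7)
The plan is to carry out one step of SGD on the mean-squared-error loss $L(\theta) = \frac{1}{2N}\sum_i \|f_\theta(x_i)-y_i\|^2$ so that the two updates differ only by a closed-form noise term. Writing $J_i = \nabla_\theta f_\theta(x_i)|_{\theta=\theta_0} \in \mathbb{R}^{m\times p}$, the clean update is $\theta_1^{(1)} = \theta_0 - \tfrac{\eta}{N}\sum_i J_i^\top(f_{\theta_0}(x_i)-y_i)$, while the perturbed one satisfies $\theta_1^{(2)} = \theta_1^{(1)} + \xi$ with $\xi = \tfrac{\eta}{N}\sum_i J_i^\top \varepsilon_i$. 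Because each $\varepsilon_i$ is zero-mean Gaussian, so is $\xi$, with covariance $\Sigma_\xi = \tfrac{\eta^2\sigma^2}{N^2}\sum_i J_i^\top J_i$. A first-order Taylor expansion of $f_\theta(x_j)$ around $\theta_1^{(1)}$ then gives $f_{\theta_1^{(2)}}(x_j) = f_{\theta_1^{(1)}}(x_j) + J_j^{(1)}\xi + O(\|\xi\|^2)$, where $J_j^{(1)}$ denotes the Jacobian evaluated at $\theta_1^{(1)}$. This linearization is the workhorse for both parts.

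Part (ii) then falls out immediately: since $\xi$ has mean zero and each $J_j^{(1)}$ is deterministic, the linearized expression yields $\mathbb{E}[f_{\theta_1^{(2)}}(x_j)] = f_{\theta_1^{(1)}}(x_j)$ at first order, and averaging over $j$ delivers $\mathbb{E}[\bar f_{\theta_1}^{(2)}] = \bar f_{\theta_1}^{(1)}$. For part (i), let $\delta_j = J_j^{(1)}\xi$ and $\bar\delta = \tfrac{1}{N}\sum_j \delta_j$, so that $f_{\theta_1^{(2)}}(x_j) - \bar f_{\theta_1}^{(2)} = \bigl(f_{\theta_1^{(1)}}(x_j) - \bar f_{\theta_1}^{(1)}\bigr) + (\delta_j - \bar\delta)$. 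Squaring, averaging over $j$, and taking expectation over $\varepsilon$, the cross-term vanishes because each $\delta_j-\bar\delta$ is a zero-mean linear functional of $\xi$, leaving $\mathbb{E}[\mathcal V(f_{\theta_1}^{(2)})] = \mathcal V(f_{\theta_1}^{(1)}) + \tfrac{1}{N}\sum_j \mathbb{E}\|\delta_j - \bar\delta\|^2$. The added term equals $\tfrac{1}{N}\sum_j \operatorname{Tr}\bigl((J_j^{(1)}-\bar J^{(1)})^\top(J_j^{(1)}-\bar J^{(1)})\Sigma_\xi\bigr)$, a sum of traces of products of positive semidefinite matrices, hence non-negative, and strictly positive as soon as the Jacobians vary across samples in directions not annihilated by $\Sigma_\xi$.

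The main obstacle I anticipate is the first-order Taylor linearization. For models linear in $\theta$ the argument is exact, but for nonlinear networks the $O(\|\xi\|^2)$ remainder does not vanish in expectation and could in principle bias the identity in (ii) at order $\sigma^2$ and shift the inequality in (i). I would handle this either by restricting attention to a small-$\eta$ or small-$\sigma$ regime in which the linear term dominates, or by invoking an NTK-style lazy-training assumption so that Jacobians at $\theta_0$ and $\theta_1^{(1)}$ effectively coincide and the update is genuinely linear. A secondary subtlety is the strict inequality in (i): it requires a mild non-degeneracy condition, namely that the $J_j^{(1)}$ are not all identical, which I would either add as an explicit hypothesis or dismiss as generic. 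Finally, since $\mathcal V(f_{\theta_1^{(2)}})$ is itself random, the statement in (i) should be read as an inequality for the expected variance, as the derivation above naturally produces.
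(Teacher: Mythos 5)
Your proposal follows essentially the same route as the paper's proof: write the perturbed SGD update as the clean update plus a zero-mean linear functional of the noise, linearize the network output to first order, observe that the cross term vanishes under $\mathbb{E}_\varepsilon$, and identify the added (expected) variance as a sum of traces of positive-semidefinite quadratic forms that is strictly positive under a mild non-degeneracy condition, with part (ii) following from the zero mean of the noise. If anything, your version is slightly more careful on points the paper glosses over---you keep each $\varepsilon_i$ attached to its own Jacobian inside the quadratic form rather than factoring it out, and you state explicitly that (i) is an expected-variance statement requiring the Jacobians to vary across samples and that the Taylor remainder must be controlled.
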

\begin{proof}
    Let $\alpha$ be the learning rate and $\mathcal{B}$ be the mini-batch of size $B$, the loss functions under the two training scenarios are given as:
    \begin{align}
        \setlength{\abovedisplayskip}{5pt} \setlength{\belowdisplayskip}{5pt}
        L^{(1)}(\theta, \mathcal{B}) &= \frac{1}{B} \sum_{i \in \mathcal{B}} \frac{1}{2} \lVert f_\theta(x_i) - y_i \rVert^2 \\
        L^{(2)}(\theta, \mathcal{B}) &= \frac{1}{B} \sum_{i \in \mathcal{B}} \frac{1}{2} \lVert f_\theta(x_i) - (y_i + \varepsilon_i)\rVert^2.
    \end{align}
    After one step of SGD, the updated parameters under the two scenarios satisfy:    
    \begin{align}
        \setlength{\abovedisplayskip}{5pt} \setlength{\belowdisplayskip}{5pt}
        \theta_1^{(2)} &= \theta_0 - \alpha \nabla_{\theta} L^{(2)}(\theta_0, \mathcal{B}) \notag \\
        &= \theta_0 - \frac{\alpha}{B} \sum_{i \in \mathcal{B}} J_{\theta_0}(x_i)^T \big(f_{\theta_0}(x_i) - (y_i + \varepsilon_i) \big) \notag \\
        &= \theta_1^{(1)} + \frac{\alpha}{B} \sum_{i \in \mathcal{B}} J_{\theta_0}(x_i)^T \varepsilon_i,
    \end{align}
    $J_{\theta_0}(x_i)$ is the Jacobian matrix of $f_{\theta_0}(x_i)$ with respect to $\theta_0$.

    Using the \textit{first-order Taylor expansion}, the output of the model for any $x$ can be expressed as:
    \begin{equation}
        \setlength{\abovedisplayskip}{5pt} \setlength{\belowdisplayskip}{5pt}
        f_{\theta_1}(x) = f_{\theta_0}(x) + \frac{\partial f_{\theta_0}(x)}{\partial \theta} (\theta_1 - \theta_0) + o(\lVert \theta_0 - \theta_0 \rVert),
    \end{equation}
    where the higher-order term  $o(\lVert \theta_0 - \theta_0 \rVert)$ can be ignored. The outputs under the two scenarios satisfy:
    \begin{align}
    \label{eq:relation-mean-f2-f1}
        \setlength{\abovedisplayskip}{5pt} \setlength{\belowdisplayskip}{5pt}
        f_{\theta_1}^{(2)}(x) &\approx f_{\theta_0}(x) + J_{\theta_0}(x) (\theta_1^{(2)} - \theta_0) \notag \\
        &= f_{\theta_1}^{(1)}(x) + \frac{\alpha}{B} \sum_{i \in \mathcal{B}} J_{\theta_0}(x) J_{\theta_0}(x_i)^T \varepsilon_i.
    \end{align}
    Let $M(x, x_i) = J_{\theta_0}(x) J_{\theta_0}(x_i)^T$, the output means satisfy:
    \begin{equation}
        \setlength{\abovedisplayskip}{5pt} \setlength{\belowdisplayskip}{5pt}
        \bar{f}_1^{(2)} = \bar{f}_1^{(1)} + \frac{\alpha}{BN}\sum_{n=1}^N \sum_{i \in \mathcal{B}}{M(x_n,x_i)} \varepsilon_i.
    \end{equation}
    To analyze the variance, let $\Delta f_1^{(k)}(x_j) = f_{\theta_1}^{(k)}(x_j)-\bar{f}_1^{(k)}$. For the covariance matrix under scenario \textit{(2)}, we have: 
    \begingroup
    \begin{align}
        \setlength{\abovedisplayskip}{5pt} \setlength{\belowdisplayskip}{5pt}
        C^{(2)} &= \frac{1}{N} \sum_{j=1}^N{\mathbb{E}_\varepsilon\Big[ \Delta f_1^{(2)}(x_j) \big(\Delta f_1^{(2)}(x_j)\big)^T \Big]} \notag \\
        &= \frac{1}{N} \sum_{j=1}^N{\Big[\Delta f_1^{(1)}(x_j) \big(\Delta f_1^{(1)}(x_j)\big)^T + \alpha^2 B A_j \sigma^2 I_m A_j^T \Big]} \notag \\
        &= C^{(1)} + \frac{\alpha^2 B}{N} \sum_{j=1}^N{A_j \sigma^2 I_m A_j^T},
    \end{align}
    \endgroup
    where $A_j\!=\!\frac{1}{B}\sum_{i \in \mathcal{B}} M(x_j,x_i)\!-\!\frac{1}{BN}\sum_{n=1}^N \sum_{i \in \mathcal{B}} M(x_n,x_i)$.
    
    The trace of the covariance matrix satisfies:
    \begin{equation}
    \label{eq:tr-relation}
        \setlength{\abovedisplayskip}{5pt} \setlength{\belowdisplayskip}{5pt}
        \operatorname{Tr}(C^{(2)}) = \operatorname{Tr}(C^{(1)}) + \frac{\alpha^2 B\sigma^2}{N} \sum_{j=1}^N{\operatorname{Tr}(A_j A_j^T)}.
    \end{equation}
    Since $\operatorname{Tr}(A_j A_j^T) \geq 0$ and equality holds only in trivial cases, i.e., constant $M(x_j,x_i)$, we have $\mathcal{V}(f_{\theta_1}^{(2)}) > \mathcal{V}(f_{\theta_1}^{(1)})$, where conclusion \textit{(i)} holds.

    Finally, from Equation~\ref{eq:relation-mean-f2-f1}, the expectation satisfies:
    \begin{equation}
        \setlength{\abovedisplayskip}{5pt} \setlength{\belowdisplayskip}{5pt}
        \mathbb{E}_{\varepsilon} \Big[ \bar{f}_1^{(2)} \Big] = \bar{f}_1^{(1)} + \frac{\alpha}{B} \sum_{i \in \mathcal{B}} M(x_n,x_i)\mathbb{E}_{\varepsilon} \Big[ \varepsilon_i \Big]= \bar{f}_1^{(1)}.
    \end{equation}
    Therefore, conclusion \textit{(ii)} holds.

    The complete proof is provided in Appendix~\ref{app:proof-lemma1}.
\end{proof}
Lemma~\ref{lem:noise-model-variance} establishes that training with perturbed targets increases the variance of model outputs compared to those trained with noise-free targets. Notably, as indicated in Equation~\ref{eq:tr-relation}, this variance increase accumulates over successive training steps. We extend this effect to sequential cases, which aligns more closely with the RL context.
\begin{definition}
\label{def:var-trajectory}
Given $N$ trajectories $\mathcal{T}=\{(s_0, \dots, s_H)\}_i^N$ of length $H$, the \textit{variance of the trajectories} is defined as:
\begin{align}
    \setlength{\abovedisplayskip}{5pt} \setlength{\belowdisplayskip}{5pt}
    \mathcal{V}(\mathcal{T}) = \sum_{h=0}^H{\frac{1}{N}\sum_{i=1}^N{\big(s_{h,i} - \bar{s}_h\big)^2}}, \bar{s}_h = \frac{1}{N} \sum_{i=1}^N{s_{h,i}}.
\end{align}
\end{definition}
\begin{lemma}
\label{lem:var-trajectory}
Let $g^{(1)}$ and $g^{(2)}$ be two functions mapping the space $\mathcal{S} \to \mathcal{S}$, and assume their outputs variances (Definition~\ref{def:var-model-output}) satisfy $\mathcal{V}(g^{(2)}) > \mathcal{V}(g^{(1)})$. Starting from the same initial distribution $p(s_0)$, generate $N$ trajectories recursively as $s_{h+1}^{(k)} = g^{(k)}(s_h^{(k)}), k \in \{1,2\}$, and denote the sets of trajectories as $\mathcal{T}^{(1)}$ and $\mathcal{T}^{(2)}$. Then, the variance of the trajectories (Definition~\ref{def:var-trajectory}) satisfies $\mathcal{V}(\mathcal{T}^{(2)}) > \mathcal{V}(\mathcal{T}^{(1)})$.
\end{lemma}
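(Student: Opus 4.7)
The plan is to decompose $\mathcal{V}(\mathcal{T}^{(k)})=\sum_{h=0}^H V_h^{(k)}$, where $V_h^{(k)}:=\frac{1}{N}\sum_{i=1}^N(s_{h,i}^{(k)}-\bar{s}_h^{(k)})^2$ is the empirical per-step spread at time $h$, and to prove by induction on $h$ that $V_h^{(2)}\ge V_h^{(1)}$ for every $h$, with strict inequality for at least one index. Summing these per-step comparisons then delivers the trajectory-variance claim directly from Definition~\ref{def:var-trajectory}.

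For the base case $h=0$, I would couple the two scenarios on the same initial samples drawn from $p(s_0)$, giving $V_0^{(1)}=V_0^{(2)}$. For the inductive step, the recursion $s_{h+1,i}^{(k)}=g^{(k)}(s_{h,i}^{(k)})$ means that $V_{h+1}^{(k)}$ is exactly the model-output variance $\mathcal{V}(g^{(k)})$ of Definition~\ref{def:var-model-output}, evaluated on the empirical input set $\{s_{h,i}^{(k)}\}_{i=1}^N$. At $h=0$ the two input sets coincide, so the hypothesis $\mathcal{V}(g^{(2)})>\mathcal{V}(g^{(1)})$ applies directly and gives the strict inequality $V_1^{(2)}>V_1^{(1)}$. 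For $h\ge 1$, I would invoke a law-of-total-variance decomposition to split the gap $V_{h+1}^{(2)}-V_{h+1}^{(1)}$ into an \emph{intrinsic} contribution, coming from the larger output spread of $g^{(2)}$ on any common input set, and a \emph{propagated} contribution, reflecting that by induction the inputs fed to $g^{(2)}$ are already at least as dispersed as those fed to $g^{(1)}$. Both contributions are nonnegative and at least one is strictly positive.

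The main obstacle I anticipate is the mismatch of input sets at steps $h\ge 1$: the hypothesis $\mathcal{V}(g^{(2)})>\mathcal{V}(g^{(1)})$ is stated with respect to a single shared input set, but once the two scenarios diverge their maps receive different inputs. I expect to resolve this by reading the hypothesis in the natural distributional sense in which it was derived in Lemma~\ref{lem:noise-model-variance}, where the additional spread of $g^{(2)}$ over $g^{(1)}$ stems from injected noise that is independent of the inputs and therefore persists under any input distribution. Under that reading, the intrinsic gap at each step cannot be erased by the propagation term, the induction closes, and telescoping across $h=0,\dots,H$ yields $\mathcal{V}(\mathcal{T}^{(2)})>\mathcal{V}(\mathcal{T}^{(1)})$.
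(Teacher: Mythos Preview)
Your approach is essentially the paper's: decompose $\mathcal{V}(\mathcal{T}^{(k)})$ into per-step empirical variances $V_h^{(k)}$, argue $V_h^{(2)}\ge V_h^{(1)}$ at every step from the hypothesis $\mathcal{V}(g^{(2)})>\mathcal{V}(g^{(1)})$, and sum. You are actually more careful than the paper, which simply asserts the per-step inequality directly from the hypothesis without ever addressing the input-set mismatch you flag for $h\ge 1$; the distributional reading you propose is precisely the implicit assumption the paper makes, and your law-of-total-variance splitting into ``intrinsic'' and ``propagated'' pieces is additional rigor the paper does not supply.
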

\begin{proof}
    Starting from the initial distribution $p(s_0)$, for any step $h>0$ in the trajectory, we have:
    \begin{equation}
        \setlength{\abovedisplayskip}{5pt} \setlength{\belowdisplayskip}{5pt}
        s_h^{(k)} = p(s_0) \prod_{h=1}^H{g^{(k)}(s_{h-1}^{(k)})}, \quad k \in \{1,2\}.
    \end{equation}
    Given $\mathcal{V}(g^{(2)}) > \mathcal{V}(g^{(1)})$, for $N$ independent trajectories:
    \begin{equation}
        \setlength{\abovedisplayskip}{5pt} \setlength{\belowdisplayskip}{5pt}
        \frac{1}{N} \sum_{i=1}^N {\big(s_{h,i}^{(2)} - \bar{s}_{h}^{(2)} \big)^2} > \frac{1}{N} \sum_{i=1}^N{ \big(s_{h,i}^{(1)} - \bar{s}_{h}^{(1)} \big)^2}.
    \end{equation}
    Summing over all steps, the trajectories' variance satisfies:
    \begin{align}
        \setlength{\abovedisplayskip}{5pt} \setlength{\belowdisplayskip}{5pt}
        \mathcal{V}(\mathcal{T}^{(2)}) &= \sum_{h=0}^H {\left[\frac{1}{N} \sum_{i=1}^N (s_{h,i}^{(2)} - \bar{s}_h^{(2)})^2\right]} \notag \\ 
        &> \sum_{h=0}^H{\left[\frac{1}{N} \sum_{i=1}^N (s_{h,i}^{(1)} - \bar{s}_h^{(1)})^2\right]} = \mathcal{V}(\mathcal{T}^{(1)}).
    \end{align}

    The complete proof is provided in Appendix~\ref{app:proof-lemma2}.
\end{proof}
Lemma~\ref{lem:var-trajectory} establishes that functions with higher output variance induce trajectories with greater variance when applied sequentially. Building on this, we formalize the effect of Gaussian noise perturbation on reward functions in RL algorithms and quantify its impact on exploration.
\begin{theorem}
\label{thm:rrp-dqn}
    \textbf{RRP in DQN.} In the DQN algorithm, let $R^{\text{env}}(\cdot)$ be the environmental reward function and $R^{\text{RRP}}(\cdot) = R^{\text{env}}(\cdot) + \varepsilon$, where $\varepsilon \sim \mathcal{N}(0,\sigma^2)$ be the perturbed reward function. The variance of trajectories sampled during learning with $R^{\text{RRP}}(\cdot)$ is strictly greater than that with $R^{\text{env}}(\cdot)$, thereby expanding the exploration range.
\end{theorem}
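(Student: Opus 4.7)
The plan is to apply Lemma~\ref{lem:noise-model-variance} to the Q-network update in DQN, and then invoke Lemma~\ref{lem:var-trajectory} to lift the resulting variance increase at the level of Q-values to a variance increase at the level of trajectories. First I would identify the regression target in DQN: each transition $(s,a,r,s')$ contributes a TD-target $y = r + \gamma \max_{a'} Q_{\theta^-}(s',a')$, and the Q-network is updated by minimizing $\tfrac{1}{2}\lVert Q_{\theta}(s,a) - y \rVert^2$ via SGD. Under RRP the reward $r$ is replaced by $r+\varepsilon$ with $\varepsilon \sim \mathcal{N}(0,\sigma^2)$, so the TD-target becomes $y + \varepsilon$, which is exactly the label-perturbation scenario of Lemma~\ref{lem:noise-model-variance}. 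Applying that lemma directly gives, after one gradient step, $\mathcal{V}(Q_{\theta_1}^{\mathrm{RRP}}) > \mathcal{V}(Q_{\theta_1}^{\mathrm{env}})$ while $\mathbb{E}[\bar Q_{\theta_1}^{\mathrm{RRP}}] = \bar Q_{\theta_1}^{\mathrm{env}}$, and the accumulation noted after Equation~\ref{eq:tr-relation} means this gap only grows over successive updates.

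Next I would propagate this variance from the Q-function to the state-transition operator that actually generates rollouts. Writing the (possibly $\epsilon$-greedy) behaviour policy as $\pi_Q$ and the environment transition as $T$, the one-step map used to sample trajectories is $g(s) = T(\cdot \mid s, \pi_Q(s))$. Because $T$ is fixed and $\pi_Q$ depends only on the Q-values at $s$, greater variance in $Q_\theta$ across states induces greater variance in the sampled action and hence in the next-state distribution, giving $\mathcal{V}(g^{\mathrm{RRP}}) > \mathcal{V}(g^{\mathrm{env}})$. With a common initial distribution $p(s_0)$, trajectories unfold as $s_{h+1} = g(s_h)$, so Lemma~\ref{lem:var-trajectory} immediately yields $\mathcal{V}(\mathcal{T}^{\mathrm{RRP}}) > \mathcal{V}(\mathcal{T}^{\mathrm{env}})$, which is precisely the claim that RRP expands the exploration range.

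The step I expect to be the main obstacle is the passage from variance in $Q_\theta$ to variance in $g$, because $\arg\max$ is discontinuous and there is no purely pointwise reason that larger output spread in $Q$ must translate into larger action spread. I plan to handle this either by working with the $\epsilon$-greedy smoothing that is standard in DQN, where the action distribution depends continuously on Q-gaps, or by adopting a Boltzmann surrogate $\pi(a|s) \propto \exp(Q(s,a)/\tau)$ whose variance is provably monotone in the variance of $Q$. A secondary subtlety is that $T$ is itself stochastic, so $\mathcal{V}(g)$ and $\mathcal{V}(\mathcal{T})$ must be read as expectations over both the environment randomness and $\varepsilon$; this is benign because the environment noise is shared between the two scenarios and only the extra $\varepsilon$-induced component contributes to the strict inequality in Lemma~\ref{lem:var-trajectory}.
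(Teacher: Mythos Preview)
Your proposal is correct and follows essentially the same route as the paper: identify $y^{\mathrm{RRP}}=y^{\mathrm{ori}}+\varepsilon$, apply Lemma~\ref{lem:noise-model-variance} to get $\mathcal{V}(Q^{\mathrm{RRP}})>\mathcal{V}(Q^{\mathrm{ori}})$, pass to a state-to-state map $g$ through the policy and transition, and invoke Lemma~\ref{lem:var-trajectory}. The one point worth noting is that the paper resolves the $\arg\max$ discontinuity you flag by committing directly to the Boltzmann surrogate $\pi^k(a\mid s)=\exp(Q_{\theta^k}(s,a))/\sum_{a'}\exp(Q_{\theta^k}(s,a'))$ rather than treating it as an alternative to $\epsilon$-greedy, so your second option is exactly the one taken.
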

\begin{proof}
    Let the parameterized Q-function be $Q_{\theta}: \mathcal{S} \to \mathbb{R}^{|\mathcal{A}|}$. The TD-targets for RRP-DQN and the original DQN satisfy:
    \begin{equation}
        \setlength{\abovedisplayskip}{5pt} \setlength{\belowdisplayskip}{5pt}
        y^{\text{RRP}} = R^{\text{env}}(\cdot) + \varepsilon + \gamma \max_{a'} Q_{\theta'}(s', a') = y^{\text{ori}} + \varepsilon.
    \end{equation}
    By Lemma~\ref{lem:noise-model-variance}, adding Gaussian noise to the rewards increases the variance of the Q-function outputs. The policies are derived as:
    \begin{equation}
        \setlength{\abovedisplayskip}{5pt} \setlength{\belowdisplayskip}{5pt}
        \pi^k(a|s) = \frac{\exp \big(Q_{\theta^k}(s,a) \big)}{\sum_{a'}\exp \big(Q_{\theta^k}(s,a') \big)}, k \in \{ \text{RRP}, \text{ori} \},
    \end{equation}
    Since $\mathcal{V}(Q_{\theta}^{\text{RRP}}) > \mathcal{V}(Q_{\theta}^{\text{ori}})$, it follows that $\mathcal{V}(\pi^{\text{RRP}}) > \mathcal{V}(\pi^{\text{ori}})$. Define the mapping functions:
    \begin{equation}
        \setlength{\abovedisplayskip}{5pt} \setlength{\belowdisplayskip}{5pt}
        g^k(s) = \pi^k(a|s)T(s'|s,a), k \in \{ \text{RRP}, \text{ori} \},
    \end{equation}
    where $T(s'|s,a)$ is the transition function. by Lemma~\ref{lem:var-trajectory}, the variance of the trajectories sampled under RRP-DQN is strictly greater than that under the original DQN, i.e., $\mathcal{V}(\mathcal{T}^{\text{RRP}}) > \mathcal{V}(\mathcal{T}^{\text{ori}})$, thus expanding the exploration range.

    The complete proof is provided in Appendix~\ref{app:proof-thm1}.
\end{proof}
\begin{theorem}
\label{thm:rrp-a2c}
    \textbf{RRP in A2C.} In the A2C algorithm, let $R^{\text{env}}(\cdot)$ be the environmental reward function and $R^{\text{RRP}}(\cdot) = R^{\text{env}}(\cdot) + \varepsilon$, where $\varepsilon \sim \mathcal{N}(0,\sigma^2)$ be the perturbed reward function. The variance of trajectories sampled during learning with $R^{\text{RRP}}(\cdot)$ is strictly greater than that with $R^{\text{env}}(\cdot)$, thereby expanding the exploration range.
\end{theorem}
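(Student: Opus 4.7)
The plan is to mirror the structure of the proof of Theorem~\ref{thm:rrp-dqn}, but with the extra bookkeeping needed for A2C's actor--critic decomposition. In A2C the perturbation $\varepsilon$ enters through two channels rather than one: the critic's TD-target becomes $R^{\text{env}}(\cdot) + \varepsilon + \gamma V_{\phi'}(s')$, while the actor's advantage-weighted policy gradient is driven by $\nabla_\theta \log \pi_\theta(a|s)\,(R^{\text{env}}(\cdot) + \varepsilon + \gamma V(s') - V(s))$. I would establish variance amplification at the level of each parametric component and then propagate the effect to trajectories via Lemma~\ref{lem:var-trajectory}.

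The critic step is a direct application of Lemma~\ref{lem:noise-model-variance}: the regression target is shifted by i.i.d.\ Gaussian noise, so one SGD step strictly increases the output variance $\mathcal{V}(V_\phi^{\text{RRP}}) > \mathcal{V}(V_\phi^{\text{ori}})$ while preserving the expected mean. This delivers the critic half of the argument with no additional derivation.

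The main obstacle is the actor step, because the policy-gradient loss is not of the MSE form assumed in Lemma~\ref{lem:noise-model-variance}, so that lemma does not apply verbatim. I would reprove its core identity in this setting: the SGD update becomes $\theta_1^{\text{RRP}} = \theta_1^{\text{ori}} + \frac{\alpha}{B}\sum_{i \in \mathcal{B}} \nabla_\theta \log \pi_{\theta_0}(a_i|s_i)\,\varepsilon_i$, so the perturbation enters the parameters as a zero-mean, $\sigma$-scaled random vector. Expanding $\pi_{\theta_1}(\cdot|s)$ around $\theta_0$ analogously to Equation~\ref{eq:relation-mean-f2-f1}, the per-state perturbation in policy outputs is a linear image of $\{\varepsilon_i\}$ through the product of the policy Jacobian and the score function. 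The covariance analysis then proceeds exactly as for Equation~\ref{eq:tr-relation}, contributing a strictly positive semidefinite term to the covariance trace whenever the score-weighted Jacobian is non-trivial, yielding $\mathcal{V}(\pi_\theta^{\text{RRP}}) > \mathcal{V}(\pi_\theta^{\text{ori}})$ with the expected mean unchanged.

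With strict variance amplification of the policy in hand, I would define the one-step state transition mapping $g^k(s) = \pi_\theta^k(a|s)\,T(s'|s,a)$ for $k \in \{\text{RRP}, \text{ori}\}$ as in Theorem~\ref{thm:rrp-dqn}, and invoke Lemma~\ref{lem:var-trajectory} to conclude $\mathcal{V}(\mathcal{T}^{\text{RRP}}) > \mathcal{V}(\mathcal{T}^{\text{ori}})$, which is the desired expansion of the exploration range under A2C.
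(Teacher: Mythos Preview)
Your proposal is correct and close in spirit to the paper's proof, but the actor step is handled differently. Both arguments treat the critic identically via Lemma~\ref{lem:noise-model-variance}. For the actor, the paper routes through the advantage function: it computes
\[
\mathcal{V}[A^{\text{RRP}}(s)] - \mathcal{V}[A^{\text{ori}}(s)] = \big(\mathcal{V}[V^{\text{RRP}}_\phi(s)] - \mathcal{V}[V^{\text{ori}}_\phi(s)]\big) + \gamma^2\big(\mathcal{V}[V^{\text{RRP}}_\phi(s')] - \mathcal{V}[V^{\text{ori}}_\phi(s')]\big) + \sigma^2 > 0,
\]
thereby crediting both the direct $\sigma^2$ term and the already-inflated critic variance, and then invokes Lemma~\ref{lem:noise-model-variance} once more to conclude $\mathcal{V}(\pi^{\text{RRP}}_\theta) > \mathcal{V}(\pi^{\text{ori}}_\theta)$. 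You instead isolate only the direct $\varepsilon$ channel in the policy-gradient update, write the parameter perturbation $\theta_1^{\text{RRP}} - \theta_1^{\text{ori}}$ explicitly as a score-weighted sum of the $\varepsilon_i$, and re-derive the Jacobian/trace argument of Lemma~\ref{lem:noise-model-variance} from scratch for the non-MSE loss. Your route is arguably more careful about the fact that Lemma~\ref{lem:noise-model-variance}, as stated, assumes a squared-error loss and does not literally cover the policy-gradient objective (the paper's second invocation of it is somewhat informal); the paper's route has the advantage of exhibiting the compound effect of the perturbed critic on the advantage. Either way the strict inequality follows, and the closing step through $g^k(s) = \pi^k(a|s)T(s'|s,a)$ and Lemma~\ref{lem:var-trajectory} is identical.
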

\begin{proof}
    Let the parameterized value function be $V_{\phi}: \mathcal{S} \rightarrow \mathbb{R}$, and the policy be $\pi_{\theta}: \mathcal{S} \rightarrow \mathcal{A}$. Similar to the DQN case, the TD-targets for updating the value functions satisfy:
    \begin{equation}
        \setlength{\abovedisplayskip}{5pt} \setlength{\belowdisplayskip}{5pt}
        y^{\text{RRP}} = R^{\text{env}}(\cdot) + \varepsilon + \gamma V_{\phi'}(s') = y^{\text{ori}} + \varepsilon.
    \end{equation}
    By Lemma~\ref{lem:noise-model-variance}, adding Gaussian noise increases the variance of the value function outputs, $\mathcal{V}(V_{\phi}^{\text{RRP}}) > \mathcal{V}(V_{\phi}^{\text{ori}})$. Then consider the advantage functions:
    \begin{equation}
        \setlength{\abovedisplayskip}{5pt} \setlength{\belowdisplayskip}{5pt}
        A^k(s) = R^k(s) + \gamma V_{\phi}^k(s') - V_{\phi}^k(s), k \in \{ \text{RRP}, \text{ori} \}.
    \end{equation}
    The variances of the advantage values satisfy:
    \begin{align}
        \setlength{\abovedisplayskip}{5pt} \setlength{\belowdisplayskip}{5pt}
        \mathcal{V}[A^{\text{RRP}}(s)] - \mathcal{V}[A^{\text{ori}}(s)] = &\Big( \mathcal{V}[V^{\text{RRP}}_{\phi}(s)] - \mathcal{V}[V^{\text{ori}}_{\phi}(s)]\Big) \notag \\
        &+ \gamma^2 \Big( \mathcal{V}[V^{\text{RRP}}_{\phi}(s')] -\mathcal{V}[V^{\text{ori}}_{\phi}(s')]\Big) + \sigma^2 > 0,
    \end{align}
    which follows that $\mathcal{V}(A^{\text{RRP}}) > \mathcal{V}(A^{\text{ori}})$. By optimizing the corresponding advantages, the policies satisfy $\mathcal{V}(\pi^{\text{RRP}}) > \mathcal{V}(\pi^{\text{ori}})$. Define the mapping functions $g^k(s) = \pi^k(a|s)T(s'|s,a)$, $k \in \{ \text{RRP}, \text{ori} \}$, where \( T(s'|s, a) \) is the transition function. By Lemma~\ref{lem:var-trajectory}, we can conclude that $\mathcal{V}(\mathcal{T}^{\text{RRP}}) > \mathcal{V}(\mathcal{T}^{\text{ori}})$, thus expanding the exploration range.

    The complete proof is provided in Appendix~\ref{app:proof-thm2}.
\end{proof}

\section{Algorithms}

We integrate RRP into two representative RL algorithms: Soft Actor-Critic (SAC)~\citep{sac:haarnoja2018soft} and Proximal Policy Optimization (PPO)~\citep{ppo:schulman2017proximal}. The algorithms \textit{RRP-SAC} and \textit{RRP-PPO} are detailed in Algorithm~\ref{alg:rrp-sac} and \ref{alg:rrp-ppo}, respectively. For PPO, the annealed noise is injected into the reward signal in real-time during the interaction with the environment and directly used for policy updates. In contrast, for SAC, the unannealed noise is sampled from a Gaussian distribution during interaction and stored alongside the transition in the replay buffer. When a batch of transitions is sampled, the stored noise is annealed at this stage and used for policy optimization.

\begin{algorithm}[h]
\caption{Random Reward Perturbation Soft Actor-Critic (RRP-SAC)}
\label{alg:rrp-sac}
\begin{algorithmic}[1]
    \REQUIRE Environment $\mathcal{E}$.
    \REQUIRE Agent with policy $\pi_\theta$ and Q-function $Q_\phi$.
    \REQUIRE Augmented experience replay buffer $\mathcal{D}$.
    \REQUIRE Initial reward perturbation scale $\sigma_0$.
    \REQUIRE Noise annealing period $\lambda T$.
    
    \FOR{iteration $t = 1,2,\dots, T$}
        \FOR{each environment step}
        \STATE $a_t \sim \pi_\theta(\cdot|s_t)$ \hfill $\triangleright$ Sample action from policy.
        \STATE $s_{t+1}, r^{\text{env}}_t \sim \mathcal{E}(s_t, a_t)$  \hfill $\triangleright$ Interact with environment.
        \STATE $\varepsilon_t \sim \mathcal{N}(0, \sigma_0^2)$ \hfill $\triangleright$ sample a noise from Gaussian distribution with the \textbf{initial} scale.
        \STATE $\mathcal{D} \leftarrow \mathcal{D} \cup \{s_t, a_t, r^{\text{env}}_t, \varepsilon_t, s_{t+1}\}$  \hfill $\triangleright$ Store the augmented transition in the augmented replay buffer.
        \ENDFOR

        \FOR{each gradient step}
        \STATE $\mathcal{B} = \{s_t, a_t, r^{\text{env}}_t, \varepsilon_t, s_{t+1}\}_i^B \sim \mathcal{D}$ \hfill $\triangleright$ Sample a batch of augmented transitions.
        \STATE $\varepsilon_{t,i} \leftarrow \max \{0, \varepsilon_t - \varepsilon_t t / \lambda T \}_i$, for $i = 1,2,\dots,B$ \hfill $\triangleright$ Anneal the noise scale.
        \STATE $r^{\text{RRP}}_{t,i} = r^{\text{env}}_{t,i} + \varepsilon_{t,i}$, for $i = 1,2,\dots,B$ \hfill $\triangleright$ Perturb the reward signal.
        \STATE $\phi \leftarrow \phi - \alpha_\phi \nabla_\phi L_{\text{SAC}}(\phi;\mathcal{B})$ \hfill $\triangleright$ Update Q-function $Q_{\phi}$ using the perturbed rewards.
        \STATE $\theta \leftarrow \theta - \alpha_\theta \nabla_\theta L_{\text{SAC}}(\theta;\mathcal{B})$ \hfill $\triangleright$ Update policy $\pi_{\theta}$ using the perturbed rewards.
        \ENDFOR
    \ENDFOR
\end{algorithmic}
\end{algorithm}

\begin{algorithm}[h!]
\caption{Random Reward Perturbation Proximal Policy Optimization (RRP-PPO)}
\label{alg:rrp-ppo}

\begin{algorithmic}[1]
    \REQUIRE Environment $\mathcal{E}$.
    \REQUIRE Agent with policy $\pi_\theta$ and value function $V_\phi$.
    \REQUIRE Trajectory buffer $\mathcal{T}$ for on-policy data collection.
    \REQUIRE Initial reward perturbation scale $\sigma_0$.
    \REQUIRE Noise annealing period $\lambda T$.

\FOR{iteration $t = 1, 2, \dots, T$}
    \STATE $a_t \sim \pi_\theta(\cdot|s_t)$ \hfill $\triangleright$ Sample action from policy.
    \STATE $s_{t+1}, r^{\text{env}}_t \sim \mathcal{E}(s_t, a_t)$  \hfill $\triangleright$ Interact with environment.
    \STATE $\sigma_t = \max \{0, \sigma_0 - \sigma_0 t / \lambda T \}$ \hfill $\triangleright$ Anneal the noise scale.
    \STATE $\varepsilon_t \sim \mathcal{N}(0, \sigma_t^2)$ \hfill $\triangleright$ Sample a noise from Gaussian distribution with the \textbf{annealed} scale.
    \STATE $r^{\text{RRP}}_t = r^{\text{env}}_t + \varepsilon_t$ \hfill $\triangleright$ Perturb the reward signal.
    \STATE $\mathcal{T} \leftarrow \mathcal{T} \cup \{(s_t, a_t, r^{\text{RRP}}_t, s_{t+1})\}$ \hfill $\triangleright$ Store the transition with RRP reward in the trajectory buffer. 

    \IF{trajectory length is reached}
        \STATE $\phi \leftarrow \phi - \alpha_\phi \nabla_\phi L_{\text{PPO}}(\phi; \mathcal{T})$ \hfill $\triangleright$ Update value function $V_{\phi}$ using the perturbed rewards.
        \STATE $\theta \leftarrow \theta - \alpha_\theta \nabla_\theta L_{\text{PPO}}(\theta; \mathcal{T})$ \hfill $\triangleright$ Update policy $\pi_{\theta}$ using the perturbed rewards.
        \STATE $\mathcal{T} \leftarrow \emptyset$ \hfill $\triangleright$ Clear the trajectory buffer.
    \ENDIF
\ENDFOR
\end{algorithmic}
\end{algorithm}

\section{Experiments}

To evaluate the effectiveness of RRP, we conduct experiments across three challenging continuous-control domains: \textit{MuJoCo}~\citep{mujoco:todorov2012mujoco}, \textit{Mobile Manipulator}~\citep{robot:gymnasium_robotics2023github,robot:plappert2018multi}, and \textit{Dexterous Hand}~\citep{hand:melnik2021using}, encompassing nine tasks. For each task, we evaluate two types of environmental reward structures: \textit{dense rewards} and \textit{sparse rewards}. Dense rewards are directly tied to the agent's progress toward the goal. For example, in the \textit{RobotPush} task, the dense reward is defined as the negative distance between the block's current position and the target position; while sparse rewards are goal-conditioned, providing a reward of $1$ only when the block reaches the target, and $0$ otherwise. The sparse-reward setting is more challenging due to the absence of intermediate feedback. All tasks are shown in Figure~\ref{fig:tasks}, with detailed reward structures described in Appendix~\ref{app:tasks}.

\begin{figure}[t]
    \centering
    \includegraphics[width=\linewidth]{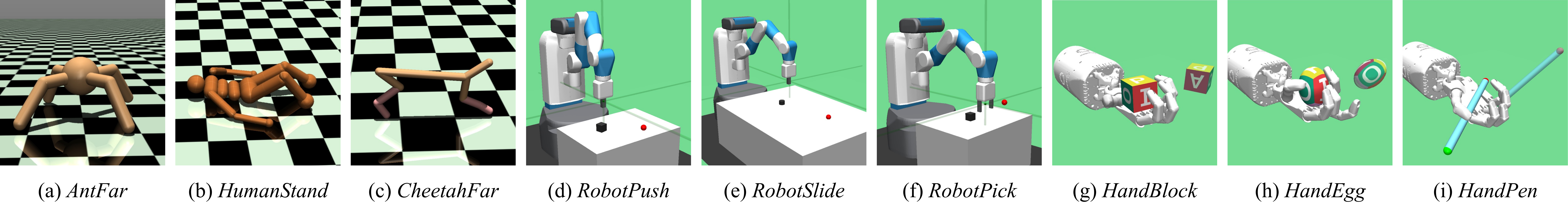}
    \caption{Continuous-control tasks in experiments with dense and sparse reward structures.}
    \label{fig:tasks}
\end{figure}

\subsection{Comparison Evaluation}

\begin{figure}[t]
    \centering
    \includegraphics[width=\linewidth]{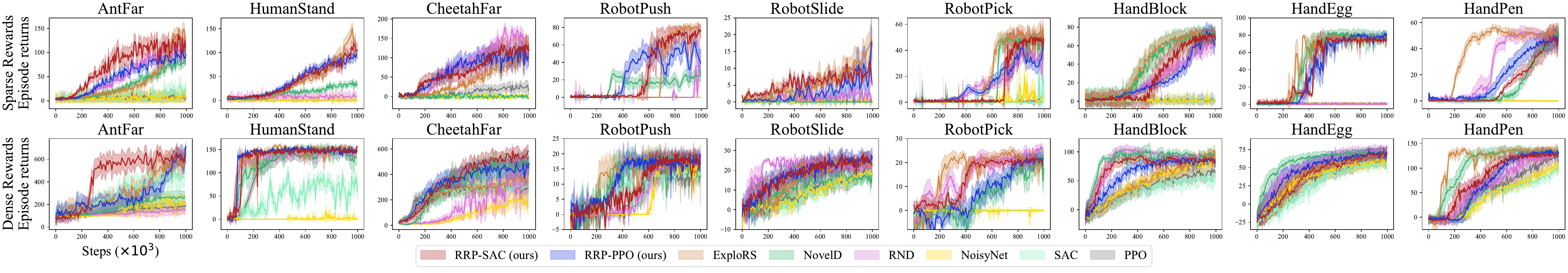}
    \caption{Comparison of RRP-SAC and RRP-PPO with several baselines on both dense-reward and sparse-reward tasks.}
    \label{fig:comparison}
\end{figure}

\begin{table}[t]
\centering
\setlength{\tabcolsep}{1.2pt}
\scriptsize
\caption{Average returns obtained from $100$ tests of the final models.}
\label{tab:model-performance}

\begin{tabular}{cc|cccc|cccc}
    \toprule
    Rewards & Tasks & RRP-SAC & RRP-PPO & SAC & PPO & ExploRS & Noveld & RND & NoisyNet \\
    \midrule
    \multirow{9}{*}{\makecell{Sparse\\Rewards}} 
    & \textit{AntFar} & 116.85$\pm$5.32 & 94.96$\pm$0.00 & 25.34$\pm$0.00 & 4.62$\pm$1.61 & 128.47$\pm$11.80 & 81.68$\pm$0.54 & 124.20$\pm$0.85 & 12.04$\pm$4.21 \\
    & \textit{HumanStand} & 104.34$\pm$0.00 & 97.05$\pm$0.00 & 0.00$\pm$0.00 & 0.02$\pm$0.01 & 128.85$\pm$3.03 & 31.56$\pm$0.00 & 7.75$\pm$0.00 & 2.91$\pm$1.90 \\
    & \textit{CheetahFar} & 124.01$\pm$4.79 & 101.35$\pm$1.46 & 4.83$\pm$0.76 & 22.41$\pm$4.75 & 120.61$\pm$0.00 & 4.86$\pm$0.42 & 115.88$\pm$0.33 & 0.51$\pm$0.64 \\
    & \textit{RobotPush} & 75.64$\pm$0.54 & 40.33$\pm$0.54 & 0.00$\pm$0.00 & 0.00$\pm$0.00 & 78.02$\pm$0.00 & 24.63$\pm$0.08 & 41.32$\pm$0.00 & 0.00$\pm$0.00 \\
    & \textit{RobotSlide} & 15.32$\pm$0.00 & 17.76$\pm$0.00 & 0.00$\pm$0.00 & 0.11$\pm$0.08 & 17.35$\pm$0.58 & 0.00$\pm$0.00 & 1.06$\pm$0.00 & 0.18$\pm$0.13 \\
    & \textit{RobotPick} & 48.47$\pm$0.59 & 45.37$\pm$0.00 & 11.01$\pm$7.79 & 0.38$\pm$0.00 & 52.07$\pm$0.00 & 40.94$\pm$0.62 & 43.16$\pm$0.00 & 3.99$\pm$0.00 \\
    & \textit{HandBlock} & 68.33$\pm$1.00 & 60.32$\pm$0.00 & 0.30$\pm$0.00 & 0.00$\pm$0.00 & 70.54$\pm$0.65 & 70.63$\pm$1.21 & 69.66$\pm$1.52 & 0.00$\pm$0.00 \\
    & \textit{HandEgg} & 74.17$\pm$1.07 & 79.26$\pm$1.31 & 0.58$\pm$0.48 & 0.87$\pm$0.44 & 83.39$\pm$0.00 & 84.45$\pm$0.00 & 0.11$\pm$0.09 & 0.00$\pm$0.00 \\
    & \textit{HandPen} & 44.41$\pm$0.00 & 47.03$\pm$1.81 & 0.03$\pm$0.02 & 0.09$\pm$0.01 & 54.21$\pm$0.00 & 52.20$\pm$1.20 & 45.50$\pm$0.39 & 0.00$\pm$0.00 \\
    \midrule
    \multirow{9}{*}{\makecell{Dense\\Rewards}} 
    & \textit{AntFar} & 637.89$\pm$0.00 & 713.12$\pm$0.00 & 441.75$\pm$0.00 & 187.20$\pm$0.00 & 672.28$\pm$0.00 & 259.54$\pm$5.87 & 149.63$\pm$3.05 & 210.51$\pm$9.72 \\
    & \textit{HumanStand} & 149.72$\pm$4.91 & 147.42$\pm$0.83 & 58.90$\pm$0.00 & 110.48$\pm$0.00 & 146.10$\pm$0.00 & 142.38$\pm$0.00 & 147.92$\pm$0.00 & 1.15$\pm$0.33 \\
    & \textit{CheetahFar} & 583.34$\pm$0.00 & 487.31$\pm$0.00 & 291.41$\pm$0.00 & 297.67$\pm$0.00 & 402.04$\pm$0.00 & 502.23$\pm$0.00 & 329.05$\pm$13.54 & 163.20$\pm$7.42 \\
    & \textit{RobotPush} & 18.11$\pm$0.76 & 17.37$\pm$1.02 & 13.40$\pm$0.56 & 12.39$\pm$0.20 & 19.87$\pm$0.15 & 17.15$\pm$0.89 & 19.07$\pm$0.39 & 14.91$\pm$0.39 \\
    & \textit{RobotSlide} & 22.57$\pm$0.00 & 26.88$\pm$0.67 & 15.68$\pm$0.00 & 16.20$\pm$0.00 & 25.29$\pm$0.00 & 20.70$\pm$0.00 & 22.76$\pm$2.19 & 17.90$\pm$0.00 \\
    & \textit{RobotPick} & 19.01$\pm$0.00 & 21.90$\pm$0.00 & 0.07$\pm$0.00 & 0.07$\pm$0.00 & 23.54$\pm$0.00 & 22.65$\pm$0.00 & 24.50$\pm$1.13 & 0.08$\pm$0.00 \\
    & \textit{HandBlock} & 80.03$\pm$0.00 & 81.97$\pm$0.00 & 55.96$\pm$0.00 & 64.58$\pm$0.00 & 89.21$\pm$0.00 & 95.67$\pm$0.00 & 88.37$\pm$0.89 & 90.28$\pm$0.00 \\
    & \textit{HandEgg} & 71.56$\pm$0.00 & 69.20$\pm$0.00 & 48.86$\pm$0.00 & 63.08$\pm$0.00 & 72.17$\pm$0.00 & 72.39$\pm$0.27 & 78.09$\pm$0.00 & 56.77$\pm$0.00 \\
    & \textit{HandPen} & 123.81$\pm$0.59 & 130.09$\pm$0.43 & 99.79$\pm$0.00 & 114.47$\pm$0.00 & 139.79$\pm$0.00 & 135.67$\pm$0.00 & 133.46$\pm$0.00 & 100.12$\pm$0.00 \\
    \bottomrule
\end{tabular}
\end{table}

We compare RRP-SAC and RRP-PPO with several well-recognized baselines across various exploration strategies. These baselines include (1) SAC~\citep{sac:haarnoja2018soft} that uses a maximum entropy term for exploration, (2) PPO~\citep{ppo:schulman2017proximal} that uses stochastic policies, (3) Random Network Distillation (RND)~\citep{novelty:rnd-burda2018exploration} that measures state novelty by prediction errors, (4) NovelD~\citep{novelty:zhang2021noveld} that accounts for novelty differences between consecutive states; (5) NoisyNet~\citep{noise-net:fortunato2018noisy} that injects noise into neural network parameters; and (6) ExploRS~\citep{novelty:ExploRS-devidze2022exploration} that combines shaped rewards from a self-supervised model and an exploration bonus. All baselines are implemented using the CleanRL library~\citep{cleanrl:huang2022cleanrl} or the official codes provided in the papers. Hyperparameters are set to their recommended values or fine-tuned to ensure a fair comparison. The average episodic returns over $5$ random seeds are shown in Figure~\ref{fig:comparison}, and the average returns obtained from $100$ tests of the final models are reported in Table~\ref{tab:model-performance}.

Compared to their respective backbone algorithms, RRP-SAC and RRP-PPO consistently deliver superior performance across all tasks in terms of both convergence speed and final returns. Figure~\ref{fig:comparison} shows that by perturbing rewards, RRP-SAC and RRP-PPO require fewer steps to converge than vanilla SAC and PPO, achieving higher sample efficiency. Additionally, they improve the ability to escape local optima, particularly in sparse-reward tasks, where SAC and PPO often converge prematurely to suboptimal returns or even fail to learn effective policies due to insufficient exploration. In contrast, RRP covers broader state spaces by perturbing the $0$-dominated sparse rewards to avoid stagnation in non-progressive optimization. This improvement is particularly evident in the \textit{Dexterous Hand} domain.

Compared to more structured exploration strategies, such as ExploRS, NovelD, and RND, the noise-driven RRP demonstrates relatively lower sample efficiency, often requiring more steps to converge. However, this performance gap is considerably reduced in dense-reward settings, where environmental rewards inherently provide informative feedback. Notably, RRP consistently achieves comparable final performance levels in most tasks, albeit with a slight delay, highlighting that RRP also effectively covers diverse states.

More importantly, RRP offers advantages in computational and resource efficiency compared to structured exploration methods, which require additional overhead to compute their exploration metrics. Specifically, RND maintains two additional neural networks and updates them with all observed states to compute novelty bonuses, ExploRS builds on RND and further incorporates a self-supervised model, while NovelD demands extra calculations to evaluate novelty differences between consecutive states. In contrast, RRP relies only on simple Gaussian noise perturbations, adding minimal time and space complexity. This makes RRP a resource-efficient alternative, particularly suitable for scenarios with constrained resources, offering a lightweight and practical solution without compromising performance.

\subsection{Effectiveness of Exploration}

\begin{figure}[t]
    \centering
    \includegraphics[width=\linewidth]{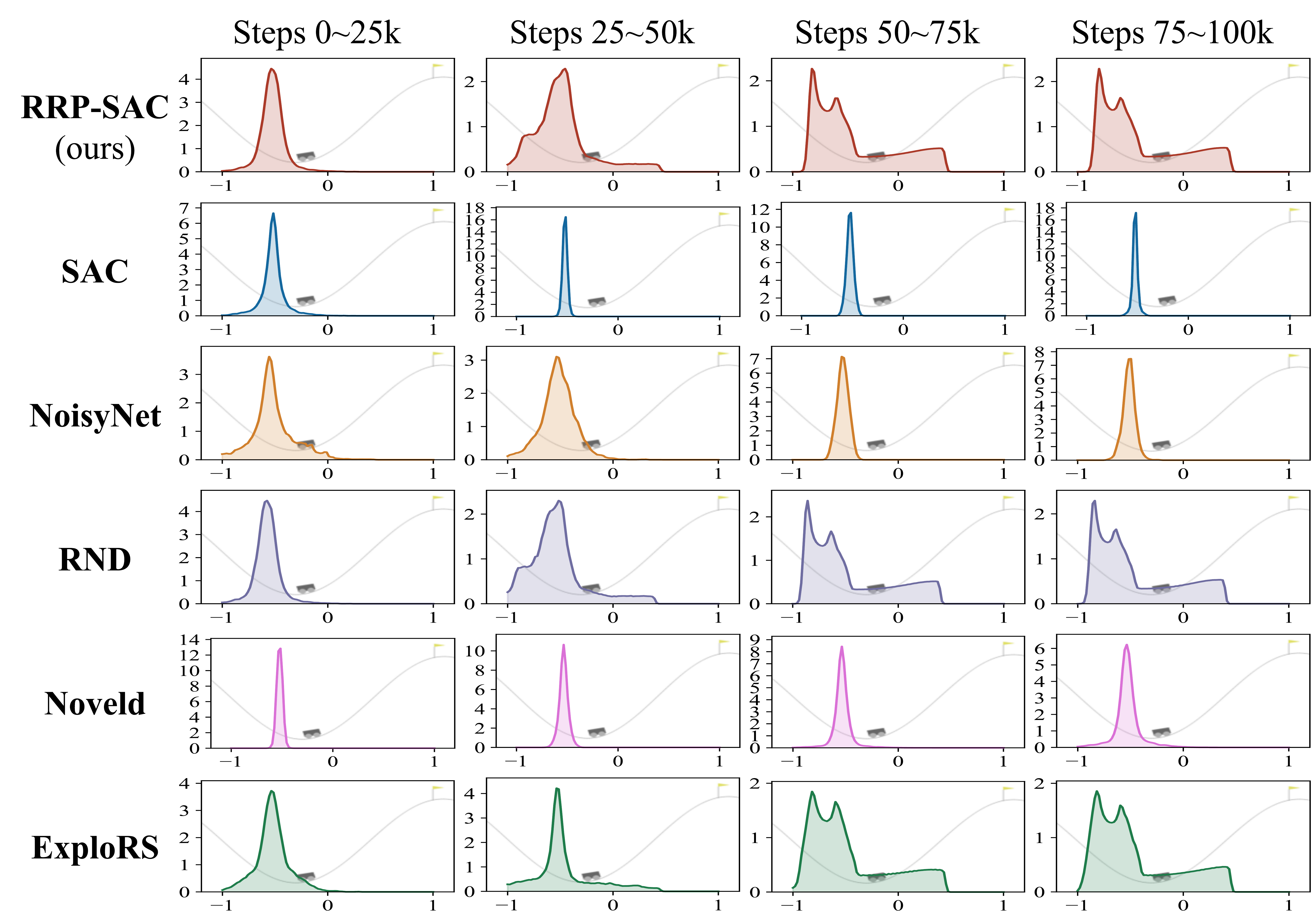}
    \caption{Density of the car's positions visited by the agents during training in the \textit{MountainCar} task.}
    \label{fig:density}
\end{figure}

To further evaluate the exploration effectiveness of RRP, we conduct a case study on the widely used \textit{MountainCar} task~\citep{mc:moore1990efficient,gymnasium:towers2024gymnasium}, where an agent controls a car by applying continuous forces to either side to slide it to the right peak of a valley. The agent receives a $+1$ reward only upon reaching the goal, while while all other states yield a negative reward proportional to the applied force, penalizing excessive force. We compare RRP-SAC with SAC, NoisyNet, RND, NovelD, and ExploRS baselines by analyzing the car's visited positions for every $25,000$ training steps. The resulting density estimated by Kernel Density Estimation (KDE) is visualized in Figure~\ref{fig:density}.

From the results, SAC and NoisyNet show limited early-stage exploration and converge prematurely to local optima by applying no force to minimize the cost penalties. NovelD shows a gradual expansion of exploration but fails to reach the goal within the given steps, indicating slower progress. In comparison, RRP-SAC effectively broadens the exploration range, building on top of its SAC backbone, enabling the agent to discover the goal state earlier and converge to the optimal policy. Similarly, RND and ExploRS quickly identify the goal state, and RRP-SAC achieves comparable performance. Considering RRP's lower computational complexity, these findings highlight its effectiveness as a practical exploration strategy.

\subsection{Ablation Study}

We investigate the impact of two critical hyperparameters of RRP: the noise scale, reflected by the initial variance $\sigma_0^2$, and the noise decay period $\lambda$, which determines the duration of noise annealing. For each parameter, we compare three different values. Learning curves are presented in Figures~\ref{fig:abl-scale} and Figure~\ref{fig:abl-period}, respectively. The final-model testing outcomes are provided in Appendix~\ref{app:add-experiments}.

\textbf{Noise Scale.} The noise scale is crucial for both exploration and stability. A small noise scale may lead to insufficient perturbation, limiting policy diversity and reducing exploration, while an overly large noise scale can cause overdeviations in policy optimization, potentially misdirecting the agent and requiring additional training steps to restore alignment with the original reward signal, causing unstable convergence. Experimental results show that RRP is robust to variations in $\sigma_0^2$, although convergence speed may vary slightly, the final policies learned are generally consistent.

\textbf{Noise Decay Period.} The noise decay period $\lambda$ mainly controls the exploration-exploitation balance, similar to the $\epsilon$-greedy strategy. A longer decay period (e.g., $\lambda = 0.5$) allows for extended exploration but may delay convergence, whereas a shorter decay period risks insufficient exploration, potentially leading to suboptimal policies. Experimental results indicate that while $\lambda$ may require fine-tuning for specific tasks, RRP is relatively insensitive to its exact value. Notably, using $\lambda = 0.3$, RRP consistently achieves strong performance across all tasks.

\begin{figure}[t]
    \centering

    \begin{subfigure}[b]{\textwidth}
    \includegraphics[width=\textwidth]{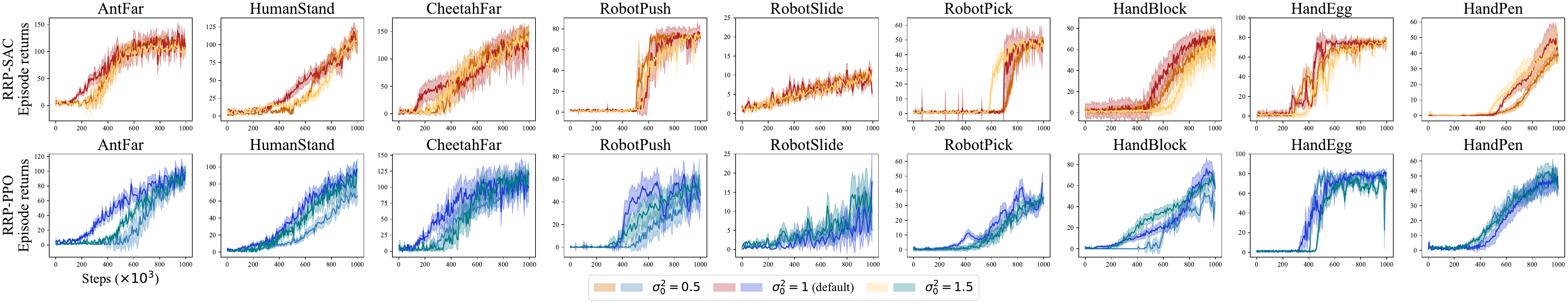}
    \caption{In the sparse-reward scenario.}
    \label{fig:abl-scale-sparse}
    \end{subfigure}

    \begin{subfigure}[b]{\textwidth}
    \includegraphics[width=\textwidth]{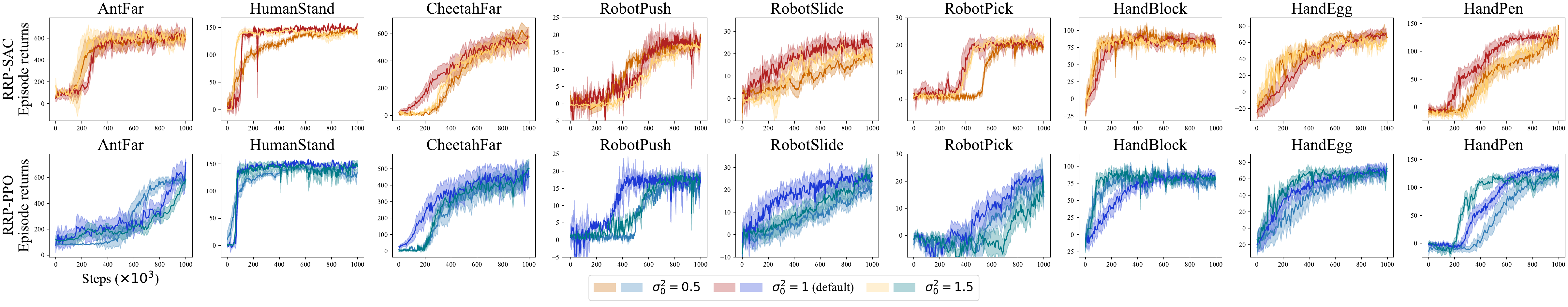}
    \caption{In the dense-reward scenario.}
    \label{fig:abl-scale-dense}
    \end{subfigure}

    \caption{Ablation study on the noise scale $\sigma_0^2$ for RRP-SAC and RRP-PPO.}
    \label{fig:abl-scale}
\end{figure}

\begin{figure}[h!]
    \centering

    \begin{subfigure}[b]{\textwidth}
    \includegraphics[width=\textwidth]{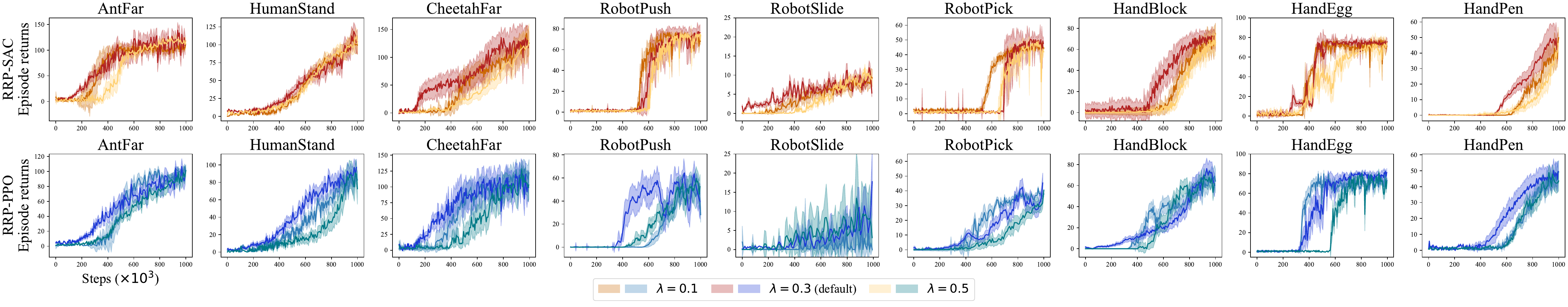}
    \caption{In the sparse-reward scenario.}
    \label{fig:abl-period-sparse}
    \end{subfigure}

    \begin{subfigure}[b]{\textwidth}
    \includegraphics[width=\textwidth]{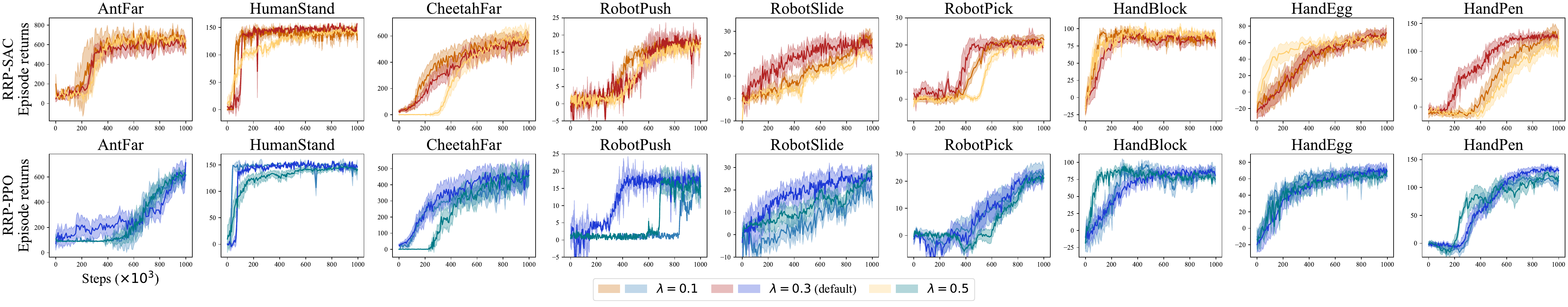}
    \caption{In the dense-reward scenario.}
    \label{fig:abl-period-dense}
    \end{subfigure}

    \caption{Ablation study on the noise decay period $\lambda$ for RRP-SAC and RRP-PPO.}
    \label{fig:abl-period}
\end{figure}

\section{Related Work}

\textbf{Random-based Exploration} introduces randomness by injecting isotropic noise to encourage exploration. A common approach is to perturb the action space, such as the well-known $\epsilon$-greedy strategy, which randomly selects actions with a decaying probability $\epsilon$~\citep{dqn:mnih2015human,a3c:mnih2016asynchronous,ddpg:lillicrap2015continuous,td3:fujimoto2018addressing}. Alternatively, policies can be represented as stochastic distributions, where actions are sampled to avoid following deterministic policies, such as Botlzmann exploration~\citep{stochastic:cesa2017boltzmann,stochastic:painter2024monte}, Thompson sampling~\citep{thompson:hu2023optimistic,thompson:agrawal2012analysis,thompson:thompson1933likelihood}, and entropy regularization~\citep{entropy:haarnoja2017reinforcement,entropy:nachum2017bridging,sac:haarnoja2018soft}. Randomness can also be introduced in other aspects of the learning process, such as scheduling learning rates~\citep{noise-learning:shani2019exploration}, randomizing the value functions~\citep{ishfaq2021randomized,hsu2024randomized,fang2025exploration}, perturbing the history experiences~\citep{kveton2019perturbed-multi,kveton2019perturbed-linear,wang2020reinforcement,sun2021reward}, or adding noise to the parameters of neural networks~\citep{noise-net:plappert2018parameter,noise-net:fortunato2018noisy,noise-net:shibata2015reinforcement}. These methods provide a simple and flexible mechanism to balance exploration and exploitation by controlling the degree of randomness.

\textbf{Novelty-Guided Exploration} directs agents toward less frequently visited states by assessing their novelty, effectively encouraging exploration of uncharted regions. A straightforward approach leverages visit counts~\citep{count:strehl2008analysis}, while many studies have extended this idea to pseudo-counts, generalizing to large or continuous state spaces~\citep{sasr:ma2024highly,count:lobel2023flipping,count:machado2020count,count:tang2017exploration,count:martin2017count,count:bellemare2016unifying}. Beyond pseudo-counts, reward-shaping techniques have incorporated intrinsic motivation as an auxiliary reward signal to complement environmental rewards. For example, Random Network Distillation (RND) and its extensions~\citep{novelty:rnd-burda2018exploration,novelty:osband2018randomized,novelty:zhang2021noveld,novelty:drnd-yang2024exploration} used the prediction error of a randomly initialized neural network as the exploration bonus. \citet{novelty:ostrovski2017count} employed PixelCNN as a density model to estimate state novelty, while \citet{novelty:ExploRS-devidze2022exploration} designed and maintained a parametric reward function. \citet{novelty:mguni2023learning} used another agent to compete against the main agent to encourage exploration. These methods have demonstrated effectiveness in sparse-reward environments and have successfully extended to high-dimensional observation spaces, such as images or raw sensory data. However, they usually face a challenge known as the ``noisy-TV" problem, where agents fixate on highly novel but meaningless regions, leading to suboptimal behaviors~\citep{noisy-tv:mavor2022stay}, thus, requiring careful tuning of the intrinsic rewards.

\textbf{Dynamics-Prediction Exploration} refers to strategies that leverage prediction errors in modeling environment dynamics to guide the agent's exploration. Curiosity-driven approaches, for instance, evaluate the agent's surprise by comparing predicted future states with actual observations, thereby encouraging further exploration in regions of high prediction error~\citep{curiosity:burda2019large,curiosity:pathak2017curiosity,luo2024gfanc}. Moreover, estimating environment dynamics has proven effective, employing techniques such as information gain~\citep{direction:houthooft2016vime}, outreach targets~
\citep{direction:fox2018dora}, and state representation learning based on inverse dynamics~\citep{direction:raileanu2020ride}. Additionally, increasing policy diversity to collect as many varied experiences as possible is another impactful approach~\citep{diversity:badia2020never,diversity:eysenbach2019diversity,diversity:hong2018diversity}. Besides, some studies incorporated hierarchical structures to learn high-level policies that guide exploration~\citep{mine:ma2024mixed,mine:ma2023hierarchical}. These methods typically require additional models to predict environmental dynamics, incurring computational overhead.

\section{Discussion and Conclusion}

We propose Random Reward Perturbation, a novel strategy that perturbs reward signals to diversify policy optimization directions, thereby enhancing exploration. This approach complements action-level stochastic methods and improves performance. At its core, RRP randomly explores the reward function space, allowing agents to attempt diverse reward structures when environmental feedback is limited, mitigating optimization traps caused by nearly all-zero sparse rewards. Moreover, RRP integrates flexibly into existing RL algorithms. Experimental results show that RRP enhances PPO and SAC, enabling them to escape local optima, and achieves comparable performance to structured exploration strategies with significantly lower resource overhead across various tasks.

As RRP relies on noise-driven exploration, it struggles in extremely sparse reward settings or extremely long-horizon tasks where directional exploration is more practical. However, RRP's generality makes it a promising extension for other exploration strategies for additive effects. For example, in reward shaping, one could design rewards that shift from random perturbations to structured and value-based rewards, leveraging RRP's exploration benefits while promoting better convergence in later stages.

\newpage
\bibliographystyle{apalike}
\bibliography{reference}

\appendix
\onecolumn

\section{Proofs}
\subsection{Proof of Lemma~\ref{lem:noise-model-variance}}
\label{app:proof-lemma1}

Given an initial model $f_{\theta_0}: \mathcal{X} \to \mathbb{R}^m$ and a dataset $\{(x_i, y_i)\}_{i=1}^N$, where $x_i \in \mathcal{X}$ and $y_i \in \mathbb{R}^m$ are inputs and labels, respectively. Consider two training scenarios:

\begin{enumerate}[label=(\arabic*),topsep=0pt,itemsep=2pt]
    \item Training with the original labels $y_i$.
    \item Training with Gaussian noise perturbed labels $y_i + \varepsilon_i$, where $\varepsilon_i \sim \mathcal{N}(0, \sigma^2 I_m)$.
\end{enumerate}

Let $\alpha$ be the learning rate and $\mathcal{B}$ be the mini-batch of size $B$, the loss functions under the two training scenarios are given by:
\begin{align}
    L^{(1)}(\theta, \mathcal{B}) &= \frac{1}{B} \sum_{i \in \mathcal{B}} \frac{1}{2} \lVert f_\theta(x_i) - y_i \rVert^2 \\
    L^{(2)}(\theta, \mathcal{B}) &= \frac{1}{B} \sum_{i \in \mathcal{B}} \frac{1}{2} \lVert f_\theta(x_i) - (y_i + \varepsilon_i)\rVert^2.
\end{align}
For the training scenario (1), the parameters are updated for one step as:
\begin{align}
    \label{eq:der-1}
    \theta_1^{(1)} &= \theta_0 - \alpha \nabla_{\theta} L^{(1)}(\theta_0, \mathcal{B}) \notag\\
    &= \theta_0 - \frac{\alpha}{B} \sum_{i \in \mathcal{B}} J_{\theta_0}(x_i)^T \big(f_{\theta_0}(x_i) - y_i \big),
\end{align}
where $J_{\theta_0}(x_i)$ is the Jacobian matrix of $f_{\theta_0}(x_i)$ with respect to $\theta_0$. Thereby, for scenario (2), the parameters are updated as:
\begin{align}
    \theta_1^{(2)} &= \theta_0 - \alpha \nabla_{\theta} L^{(2)}(\theta_0, \mathcal{B}) \notag\\
    &= \theta_0 - \frac{\alpha}{B} \sum_{i \in \mathcal{B}} J_{\theta_0}(x_i)^T \big(f_{\theta_0}(x_i) - (y_i + \varepsilon_i) \big) \notag\\
    &= \theta_0 - \frac{\alpha}{B} \sum_{i \in \mathcal{B}} J_{\theta_0}(x_i)^T \big(f_{\theta_0}(x_i) - y_i \big) + \frac{\alpha}{B} \sum_{i \in \mathcal{B}} J_{\theta_0}(x_i)^T \varepsilon_i \notag \quad \text{(By substituting Equation~\ref{eq:der-1})} \\
    &= \theta_1^{(1)} + \frac{\alpha}{B} \sum_{i \in \mathcal{B}} J_{\theta_0}(x_i)^T \varepsilon_i.
\end{align}

Using the \textit{first-order Taylor expansion}, the output of the model for any $x$ can be expressed as:
\begin{align}
    f_{\theta_1}(x) &= f_{\theta_0}(x) + \frac{\partial f_{\theta_0}(x)}{\partial \theta} (\theta_1 - \theta_0) + o(\lVert \theta_0 - \theta_0 \rVert) \notag \\
    &= f_{\theta_0}(x) + J_{\theta_0}(x)^T (\theta_1 - \theta_0) + o(\lVert \theta_0 - \theta_0 \rVert), \notag \\
    &\approx f_{\theta_0}(x) + J_{\theta_0}(x)^T (\theta_1 - \theta_0),
\end{align}
where $o(\lVert \theta_t - \theta_{t-1} \rVert)$ is the higher-order term, which can be safely ignored for small updates, typically ensured by a sufficiently small learning rate.

Therefore, the outputs of the model for scenario (1) satisfy:
\begin{align}
    \label{eq:der-2}
    f_{\theta_1}^{(1)}(x) &\approx f_{\theta_0}(x) + J_{\theta_0}(x) (\theta_1^{(1)} - \theta_0 ) \notag\\
    &= f_{\theta_0}(x) - \frac{\alpha}{B} \sum_{i \in \mathcal{B}} J_{\theta_0}(x) J_{\theta_0}(x_i)^T \big(f_{\theta_0}(x_i) - y_i \big).
\end{align}
Similarly, the outputs of the model for scenario (2) satisfy:
\begin{align}
    f_{\theta_1}^{(2)}(x) &\approx f_{\theta_0}(x) + J_{\theta_0}(x) (\theta_1^{(2)} - \theta_0) \notag\\
    &= f_{\theta_0}(x) - \frac{\alpha}{B} \sum_{i \in \mathcal{B}} J_{\theta_0}(x) J_{\theta_0}(x_i)^T \big(f_{\theta_0}(x_i) - y_i \big) + \frac{\alpha}{B} \sum_{i \in \mathcal{B}} J_{\theta_0}(x) J_{\theta_0}(x_i)^T \varepsilon_i \quad \text{(By substituting Equation~\ref{eq:der-2})} \notag\\
    &= f_{\theta_1}^{(1)}(x) + \frac{\alpha}{B} \sum_{i \in \mathcal{B}} J_{\theta_0}(x) J_{\theta_0}(x_i)^T \varepsilon_i.
\end{align}

Let $M(x, x_i) = J_{\theta_0}(x) J_{\theta_0}(x_i)^T$, then the mean of the model outputs for two scenarios satisfy:
\begin{align}
    \label{eq:der-3}
    \bar{f}_1^{(2)} &= \frac{1}{N} \sum_{n=1}^N{f_{\theta_1}^{(2)}(x_n)} \notag \\
    &= \frac{1}{N} \sum_{n=1}^N{\Big[f_{\theta_1}^{(1)}(x_n) + \frac{\alpha}{B} \sum_{i \in \mathcal{B}} M(x_n,x_i)\varepsilon_i \Big]} \notag \\ 
    &= \bar{f}_1^{(1)} + \frac{\alpha}{BN}\sum_{n=1}^N \sum_{i \in \mathcal{B}}{M(x_n,x_i)} \varepsilon_i.
\end{align}

Then the deviation of a single output $x_j$, denoted as $\Delta f_1^{(2)}(x_j)$, for two scenarios satisfy:
\begin{align}
    \Delta f_1^{(2)}(x_j) &= f_{\theta_1}^{(2)}(x_j) - \bar{f}_1^{(2)} \notag \\
    &= \Big(f_{\theta_1}^{(1)}(x_j) + \frac{\alpha}{B} \sum_{i \in \mathcal{B}}{M(x_j,x_i)\varepsilon_i} \Big) - \Big( \bar{f}_1^{(1)} + \frac{\alpha}{NB}\sum_{n=1}^N \sum_{i \in \mathcal{B}} {M(x_n,x_i)\varepsilon_i} \Big) \notag \\
    &= \big(f_{\theta_1}^{(1)}(x_j) - \bar{f}_1^{(1)} \big) + \frac{\alpha}{B} \sum_{i \in \mathcal{B}}{M(x_j,x_i)\varepsilon_i} - \frac{\alpha}{NB}\sum_{n=1}^N \sum_{i \in \mathcal{B}} {M(x_n,x_i)\varepsilon_i} \notag \\
    &= \Delta f_1^{(1)}(x_j) + \alpha \Big(\frac{1}{B}\sum_{i \in \mathcal{B}} M(x_j,x_i)\varepsilon_i - \frac{1}{NB}\sum_{n=1}^N \sum_{i \in \mathcal{B}} M(x_n,x_i)\varepsilon_i \Big).
\end{align}
Denote $A_j = \frac{1}{B}\sum_{i \in \mathcal{B}} M(x_j,x_i) - \frac{1}{BN}\sum_{n=1}^N \sum_{i \in \mathcal{B}} M(x_n,x_i)$, then:
\begin{equation}
    \Delta f_1^{(2)}(x_j) = \Delta f_1^{(1)}(x_j) + \alpha A_j \sum_{i \in \mathcal{B}} \varepsilon_i.
\end{equation}

For scenario (2), the \textit{covariance matrix} of the outputs is given by:
\begin{align}
    C^{(2)} &= \frac{1}{N} \sum_{j=1}^N{\mathbb{E}_\varepsilon\Big[ \Delta f_1^{(2)}(x_j) \big(\Delta f_1^{(2)}(x_j)\big)^T \Big]} \notag \\
    &= \frac{1}{N} \sum_{j=1}^N{\mathbb{E}_\varepsilon\Big[ \big(\Delta f_1^{(1)}(x_j) + \alpha A_j \sum_{i \in \mathcal{B}} \varepsilon_i\big) \big(\Delta f_1^{(1)}(x_j) + \alpha A_j \sum_{i \in \mathcal{B}} \varepsilon_i\big)^T \Big]} \notag \\
    &= \frac{1}{N} \sum_{j=1}^N{\Big[\Delta f_1^{(1)}(x_j) \big(\Delta f_1^{(1)}(x_j)\big)^T + \alpha^2 A_j \mathbb{E}_\varepsilon\Big[\sum_{i \in \mathcal{B}} \varepsilon_i \sum_{i' \in \mathcal{B}} \varepsilon_{i'}^T\Big] A_j^T\Big]}.
\end{align}

Since $\varepsilon_i$ are independent and identically distributed (i.i.d.), the expectation of the noise term is given by:
\begin{equation}
    \mathbb{E}_\varepsilon[\varepsilon_i \varepsilon_{i'}^T] = 
    \begin{cases}
        \sigma^2 I_m & \text{if } i = i' \\
        0 & \text{if } i \neq i'
    \end{cases}
\end{equation}
Therefore, by denoting $\mathbb{E}_\varepsilon\Big[\sum_{i \in \mathcal{B}} \varepsilon_i \sum_{i' \in \mathcal{B}} \varepsilon_{i'}^T\Big] = B\sigma^2 I_m = B\Sigma$, the relations of covariance matrix of the outputs for scenario (2) and scenario (1) can be further derived as:
\begin{align}
    C^{(2)} &= \frac{1}{N} \sum_{j=1}^N{\Delta f_1^{(1)}(x_j) \big(\Delta f_1^{(1)}(x_j)\big)^T + \alpha^2 B A_j \Sigma A_j^T} \notag \\
    &= C^{(1)} + \frac{\alpha^2 B}{N} \sum_{j=1}^N{A_j \Sigma A_j^T}.
\end{align}

Moreover, as $\operatorname{Tr}(\Sigma) = \sigma^2$, the trace of the covariance matrix satisfies:
\begin{equation}
    \operatorname{Tr}(C^{(2)}) = \operatorname{Tr}(C^{(1)}) + \frac{\alpha^2 B\sigma^2}{N} \sum_{j=1}^N{\operatorname{Tr}(A_j A_j^T)}.
\end{equation}

We note that:
\begin{equation}
    \operatorname{Tr}(A_j A_j^T) = \|A_j\|_F^2 \geq 0,
\end{equation}
where $\| \cdot \|_F$ is the Frobenius norm, and the equality holds if and only if all the elements of $A_j$ are zero, which implies that $M(x_j,x_i)$ is constant for all $j$ and $i \in \mathcal{B}$. This is extremely unlikely to happen in practice. Therefore, we can conclude that:
\begin{equation}
    \operatorname{Tr}(C^{(2)}) = \operatorname{Tr}(C^{(1)}) + \frac{\alpha^2 B\sigma^2}{N} \sum_{j=1}^N{\operatorname{Tr}(A_j A_j^T)} > \operatorname{Tr}(C^{(1)}).
\end{equation}

Given Definition~\ref{def:var-model-output}, the variance of the model outputs for two scenarios satisfy:
\begin{equation}
    \mathcal{V}(f_{\theta_1}^{(2)}) > \mathcal{V}(f_{\theta_1}^{(1)}),   
\end{equation}
where conclusion \textit{(i)} is proved. 

By Equation~\ref{eq:der-3}, the expectation of the mean of the model outputs over the noise $\varepsilon$ for scenario (2) can be derived as:
\begin{align}
    \mathbb{E}_{\varepsilon} \Big[ \bar{f}_1^{(2)} \Big] &= \mathbb{E}_{\varepsilon} \Big[ \frac{1}{N} \sum_{n=1}^N{f_{\theta_1}^{(2)}}(x_n) \Big] \notag \\
    &= \frac{1}{N} \sum_{n=1}^N{ \mathbb{E}_{\varepsilon} \Big[ f_{\theta_1}^{(1)}(x_n) + \frac{\alpha}{B} \sum_{i \in \mathcal{B}} M(x_n,x_i)\varepsilon_i \Big] } \notag \\
    &= \frac{1}{N} \sum_{n=1}^N{ f_{\theta_1}^{(1)}(x_n) + \frac{\alpha}{B} \sum_{i \in \mathcal{B}} M(x_n,x_i)\mathbb{E}_{\varepsilon} \Big[ \varepsilon_i \Big] } \notag \\
    &= \frac{1}{N} \sum_{n=1}^N{ f_{\theta_1}^{(1)}(x_n) } \notag \\
    &= \bar{f}_1^{(1)}.
\end{align}
Therefore, the expectation of the mean of model outputs over the noise $\varepsilon$ for scenario (2) equals to the mean of the model outputs for scenario (1), which proves conclusion \textit{(ii)}.

\subsection{Proof of Lemma~\ref{lem:var-trajectory}}
\label{app:proof-lemma2}

Starting from the initial distribution $p(s_0)$, for any step $h>0$ in the trajectory, the state is output by the model $g: S \to S$ where:
\begin{equation}
    s_h^{(k)} = p(s_0) \prod_{h=1}^H{g^{(k)}(s_{h-1}^{(k)})}, \quad k \in \{1,2\}.
\end{equation}
Given the condition that the variances of model outputs for two scenarios satisfy:
\begin{equation}
\mathcal{V}(g^{(2)}) > \mathcal{V}(g^{(1)}),
\end{equation}
for each step $h$, assuming mapping $N$ samples $s_{h-1,i}$ to $s_{h,i}$ by two models $g^{(1)}$ and $g^{(2)}$, respectively, the following relation holds:
\begin{equation}
    \frac{1}{N} \sum_{i=1}^N {\big(s_{h,i}^{(2)} - \bar{s}_{h}^{(2)} \big)^2} > \frac{1}{N} \sum_{i=1}^N{ \big(s_{h,i}^{(1)} - \bar{s}_{h}^{(1)} \big)^2}.
\end{equation}

Since all trajectories start from the same initial state distribution $p(s_0)$:
\begin{equation}
    \frac{1}{N} \sum_{i=1}^N{(s_{0,i}^{(2)} - \bar{s}_0^{(2)})^2} = \frac{1}{N} \sum_{i=1}^N {(s_{0,i}^{(1)} - \bar{s}_0^{(1)})^2} = 0.
\end{equation}
By Definition~\ref{def:var-trajectory} and summing over all steps, the trajectory variances satisfy:
\begin{equation}
    \mathcal{V}(\mathcal{T}^{(2)}) = \sum_{h=0}^H {\left[\frac{1}{N} \sum_{i=1}^N (s_{h,i}^{(2)} - \bar{s}_h^{(2)})^2\right]} > \sum_{h=0}^H{\left[\frac{1}{N} \sum_{i=1}^N (s_{h,i}^{(1)} - \bar{s}_h^{(1)})^2\right]} = \mathcal{V}(\mathcal{T}^{(1)}).
\end{equation}
Therefore, Lemma~\ref{lem:var-trajectory} is proved.

\subsection{Proof of Theorem~\ref{thm:rrp-dqn}}
\label{app:proof-thm1}

Given the reward functions $R^{\text{ori}}(\cdot)=R^{\text{env}}(\cdot)$ for training scenario (1), and $R^{\text{RRP}} = R^{\text{env}}(\cdot) + \varepsilon$, where $\varepsilon \sim \mathcal{N}(0, \sigma^2)$, for training scenario (2), in the Deep Q-Network (DQN) algorithm, the Temporal-Difference (TD) target are given by:
\begin{align}
    y^{\text{ori}} &= R^{\text{env}}(\cdot) + \gamma \max_{a'} Q_{\theta'}(s', a'), \\
    y^{\text{RRP}} &= R^{\text{env}}(\cdot) + \varepsilon + \gamma \max_{a'} Q_{\theta'}(s', a'), \quad \varepsilon \sim \mathcal{N}(0, \sigma^2),
\end{align}
where the $Q_{\theta'}(s', a')$ is considered as a fixed target value. Therefore, the TD-targets satisfy:
\begin{equation}
    y^{\text{RRP}} = y^{\text{ori}} + \varepsilon.
\end{equation}

By Lemma~\ref{lem:noise-model-variance}, treating the $R^{\text{env}}$ as the original label and $\varepsilon$ as the noise, training under two scenarios will induce:
\begin{equation}
    \mathcal{V}(Q^{\text{RRP}}) > \mathcal{V}(Q^{\text{ori}}).
\end{equation}
The policies for DQN are derived as:
\begin{equation}
    \label{eq:der-4}
    \setlength{\abovedisplayskip}{5pt} \setlength{\belowdisplayskip}{5pt}
    \pi^k(a|s) = \frac{\exp \big(Q_{\theta^k}(s,a) \big)}{\sum_{a'}\exp \big(Q_{\theta^k}(s,a') \big)}, \quad k \in \{ \text{RRP}, \text{ori} \}.
\end{equation}
Following Equation~\ref{eq:der-4}, the policy variances satisfy:
\begin{equation}
    \mathcal{V}(\pi^{\text{RRP}}) > \mathcal{V}(\pi^{\text{ori}}).
\end{equation}

To align with the sequential case, the mapping function $g^k: \mathcal{S} \rightarrow \mathcal{S}$ is defined as:
\begin{equation}
    \label{eq:der-5}
    g^k(s) = \pi^k(a|s)T(s'|s,a), \quad k \in \{\text{RRP}, \text{ori}\},
\end{equation}
where $T(s'|s,a)$ is the transition function. Thereby, the trajectories for two scenarios are generated by sampling from Equation~\ref{eq:der-5} state by state:
\begin{equation}
    s^k_h = g^k(s^k_{h-1}), \quad s^k_0 \sim p(s_0), \quad h \in \{1,2,\ldots,H\}, \quad k \in \{\text{RRP}, \text{ori}\}.
\end{equation}
By Lemma~\ref{lem:var-trajectory}, the trajectory variances from the two scenarios satisfy:
\begin{equation}
    \mathcal{V}(\mathcal{T}^{\text{RRP}}) > \mathcal{V}(\mathcal{T}^{\text{ori}}),
\end{equation}
which concludes the proof of Theorem~\ref{thm:rrp-dqn}.

\subsection{Proof of Theorem~\ref{thm:rrp-a2c}}
\label{app:proof-thm2}

There are two modules in the Advantage Actor-Critic (A2C) algorithm: a policy function and a value function. Consider two training scenarios with distinct reward functions: $R^{\text{ori}}(\cdot) = R^{\text{env}}(\cdot)$ for scenario \textit{ori}, and $R^{\text{RRP}} = R^{\text{env}}(\cdot) + \varepsilon$ for scenario \textit{RRP}, where $\varepsilon \sim \mathcal{N}(0, \sigma^2)$. The policy and value functions for the two scenarios are denoted as $\pi^{\text{ori}}_\theta$ and $V^{\text{ori}}_\phi$, and $\pi^{\text{RRP}}_\theta$ and $V^{\text{RRP}}_\phi$, respectively. We define an A2C update step as consisting of two consecutive sub-steps: a value function update and a policy function update. 

We begin by considering the value function update. The loss function of the value function is given by:
\begin{equation}
    L_V(\phi) = \frac{1}{2} \lVert V_{\phi}(s) - y\rVert^2,
\end{equation}
where the TD-targets $y$ for the two scenarios are:
\begin{align}
    y^{\text{ori}} &= R^{\text{env}}(s) + \gamma V_{\phi}(s'), \\
    y^{\text{RRP}} &= R^{\text{env}}(s) + \varepsilon + \gamma V_{\phi}(s'), \quad \varepsilon \sim \mathcal{N}(0, \sigma^2).
\end{align}
in which the $V_{\phi}(s')$ is considered as a fixed target value. Therefore, by Lemma~\ref{lem:noise-model-variance}, the value function variances under two training scenarios satisfy:
\begin{equation}
    \label{eq:der-6}
    \mathcal{V}(V^{\text{RRP}}_\phi) > \mathcal{V}(V^{\text{ori}}_\phi).
\end{equation}

Now consider the policy function update. The loss function of the policy function is given by:
\begin{equation}
    L_{\pi}(\theta) = -[\log(\pi_{\theta}(a|s))A(s, a)],
\end{equation}
where $A(s, a)$ is the advantage function, and under the two scenarios, are given by:
\begin{align}
    A^{\text{ori}}(s, a) &= R^{\text{env}}(s) + \gamma V_{\phi}(s') - V_{\phi}(s), \\
    A^{\text{RRP}}(s, a) &= R^{\text{env}}(s) + \varepsilon + \gamma V_{\phi}(s') - V_{\phi}(s), \quad \varepsilon \sim \mathcal{N}(0, \sigma^2).
\end{align}

Then the variance of $A^{\text{ori}}(s, a)$ is:
\begin{align}
    \text{Var}[A^{\text{ori}}(s, a)] &= \text{Var}[R^{\text{env}}(s) + \gamma V^{\text{ori}}_{\phi}(s') - V^{\text{ori}}_{\phi}(s)] \notag\\
    &= \text{Var}[R^{\text{env}}(s)] + \gamma^2 \text{Var}[V^{\text{ori}}_{\phi}(s')] + \text{Var}[V^{\text{ori}}_{\phi}(s)],
\end{align}
while similarly, the variance of $A^{\text{RRP}}(s, a)$ is:
\begin{equation}
    \text{Var}[A^{\text{RRP}}(s, a)] = \text{Var}[R^{\text{env}}(s)] + \sigma^2 + \gamma^2 \text{Var}[V^{\text{RRP}}_{\phi}(s')] + \text{Var}[V^{\text{RRP}}_{\phi}(s)].
\end{equation}
It can be derived that:
\begin{equation}
    \label{eq:der-7}
    \text{Var}[A^{\text{RRP}}(s, a)] - \text{Var}[A^{\text{ori}}(s, a)] = \sigma^2 + \gamma^2 \Big( \text{Var}[V^{\text{RRP}}_{\phi}(s')] -\text{Var}[V^{\text{ori}}_{\phi}(s')]\Big) + \Big( \text{Var}[V^{\text{RRP}}_{\phi}(s)] - \text{Var}[V^{\text{ori}}_{\phi}(s)]\Big).
\end{equation}
It is proved that the variance of the value function outputs is higher in scenario \textit{RRP} than in scenario \textit{ori} by Equation~\ref{eq:der-6}. Therefore, $\text{Var}[V^{\text{RRP}}_{\phi}(s')] -\text{Var}[V^{\text{ori}}_{\phi}(s')] > 0$ and $\text{Var}[V^{\text{RRP}}_{\phi}(s)] - \text{Var}[V^{\text{ori}}_{\phi}(s)] > 0$. thus:
\begin{equation}
    \text{Var}[A^{\text{RRP}}(s, a)] > \text{Var}[A^{\text{ori}}(s, a)].
\end{equation}
By Lemma~\ref{lem:noise-model-variance}, the policy function variances under two training scenarios satisfy:
\begin{equation}
    \mathcal{V}(\pi^{\text{RRP}}_\theta) > \mathcal{V}(\pi^{\text{ori}}_\theta).
\end{equation}

Similar to the proof of Theorem~\ref{thm:rrp-dqn}, the mapping function $g^k: \mathcal{S} \rightarrow \mathcal{S}$ is defined as:
\begin{equation}
    g^k(s) = \pi^k_\theta(a|s)T(s'|s,a), \quad k \in \{\text{RRP}, \text{ori}\},
\end{equation}
where $T(s'|s,a)$ is the transition function. Thereby, the trajectories for two scenarios are generated by sampling from the state by state:
\begin{equation}
    s^k_h = g^k(s^k_{h-1}), \quad s^k_0 \sim p(s_0), \quad h \in \{1,2,\ldots,H\}, \quad k \in \{\text{RRP}, \text{ori}\}.
\end{equation}
By Lemma~\ref{lem:var-trajectory}, the trajectory variances from the two scenarios satisfy:
\begin{equation}
    \mathcal{V}(\mathcal{T}^{\text{RRP}}) > \mathcal{V}(\mathcal{T}^{\text{ori}}),
\end{equation}
which concludes the proof of Theorem~\ref{thm:rrp-a2c}.

\section{Additional Experimental Results}
\label{app:add-experiments}

\subsection{Additional Results on Ablation Study of Noise Scale}

Table~\ref{app:tab-scale} summarizes the test performance of the final models for both RRP-SAC and RRP-PPO across all tasks, under both sparse- and dense-reward scenarios.

\begin{table*}[h]
\centering
\setlength{\tabcolsep}{1.2pt}
\small
\caption{Average returns obtained from $100$ tests of the final models in the ablation study of noise scale.}
\label{app:tab-scale}
\begin{tabular}{cccccccc}
    \toprule
    \multirow{2}{*}{Rewards} & \multirow{2}{*}{Tasks} & \multicolumn{3}{c}{RRP-SAC} & \multicolumn{3}{c}{RRP-PPO} \\ \cmidrule(lr){3-8}
    & & $\sigma_0^2 = 0.5$ & $\sigma_0^2 = 1.0$ (default) & $\sigma_0^2 = 1.5$ & $\sigma_0^2 = 0.5$ & $\sigma_0^2 = 1.0$ (default) & $\sigma_0^2 = 1.5$ \\
    \midrule
    \multirow{9}{*}{\makecell{Sparse\\Rewards}} 
	& \textit{AntFar} & 109.32$\pm$0.00 & 116.85$\pm$5.32 & 99.51$\pm$0.00 & 89.34$\pm$0.00 & 94.96$\pm$0.00 & 98.42$\pm$0.00 \\ 
    & \textit{HumanStand} & 102.82$\pm$2.00 & 104.34$\pm$0.00 & 103.94$\pm$0.00 & 65.79$\pm$0.00 & 97.05$\pm$0.00 & 94.57$\pm$2.10 \\ 
    & \textit{CheetahFar} & 128.61$\pm$0.00 & 124.01$\pm$4.79 & 136.10$\pm$0.00 &  106.23$\pm$0.00 & 101.35$\pm$1.46 & 119.60$\pm$0.00 \\ 
    & \textit{RobotPush} & 72.29$\pm$0.73 & 75.64$\pm$0.54 & 74.61$\pm$0.00 & 46.09$\pm$0.00 & 40.33$\pm$0.54 & 63.39$\pm$0.00 \\ 
    & \textit{RobotSlide} & 11.29$\pm$0.00 & 15.32$\pm$0.00 & 8.45$\pm$0.00 & 5.04$\pm$0.00 & 17.76$\pm$0.00 & 9.69$\pm$0.00 \\ 
    & \textit{RobotPick} & 49.37$\pm$0.83 & 48.47$\pm$0.59 & 45.60$\pm$1.88 & 31.45$\pm$0.00 & 45.37$\pm$0.00 & 35.38$\pm$0.00 \\ 
    & \textit{HandBlock} & 67.96$\pm$1.70 & 68.33$\pm$1.00 & 60.13$\pm$2.43 & 44.31$\pm$0.00 & 60.32$\pm$0.00 & 57.30$\pm$0.00 \\ 
    & \textit{HandEgg} & 77.21$\pm$0.00 & 74.17$\pm$1.07 & 78.07$\pm$0.00 & 74.33$\pm$0.00 & 79.26$\pm$1.31 & 66.58$\pm$0.63 \\ 
    & \textit{HandPen} & 39.21$\pm$0.49 & 44.41$\pm$0.00 & 44.06$\pm$0.00 & 41.58$\pm$0.00 & 47.03$\pm$1.81 & 44.51$\pm$1.50 \\ 
    \midrule
    \multirow{9}{*}{\makecell{Dense\\Rewards}} 
	& \textit{AntFar} & 593.33$\pm$1.17 & 637.89$\pm$0.00 & 643.31$\pm$0.00 & 573.49$\pm$0.00 & 713.12$\pm$0.00 & 585.97$\pm$0.00 \\ 
    & \textit{HumanStand} & 144.09$\pm$0.00 & 149.72$\pm$4.91 & 143.33$\pm$1.92 & 129.64$\pm$2.72 & 147.42$\pm$0.83 & 148.46$\pm$1.25 \\ 
    & \textit{CheetahFar} & 459.83$\pm$inf & 583.34$\pm$0.00 & 494.43$\pm$0.00 & 452.58$\pm$0.00 & 487.31$\pm$0.00 & 504.04$\pm$0.00 \\ 
    & \textit{RobotPush} & 20.23$\pm$0.00 & 18.11$\pm$0.76 & 17.43$\pm$0.00 & 16.57$\pm$0.11 & 17.37$\pm$1.02 & 17.41$\pm$0.51 \\ 
    & \textit{RobotSlide} & 16.08$\pm$0.00 & 22.57$\pm$0.00 & 19.84$\pm$0.00 & 20.36$\pm$0.00 & 26.88$\pm$0.67 & 23.74$\pm$0.00 \\ 
    & \textit{RobotPick} & 19.29$\pm$0.00 & 19.01$\pm$0.00 & 23.05$\pm$0.00 & 14.34$\pm$0.00 & 21.90$\pm$0.00 & 15.95$\pm$0.00 \\ 
    & \textit{HandBlock} & 77.40$\pm$0.39 & 80.03$\pm$0.00 & 75.50$\pm$0.03 & 70.88$\pm$0.70 & 81.97$\pm$0.00 & 83.25$\pm$1.78 \\ 
    & \textit{HandEgg} & 46.76$\pm$inf & 71.56$\pm$0.00 & 61.54$\pm$0.00 & 69.89$\pm$0.00 & 69.20$\pm$0.00 & 72.57$\pm$0.00 \\ 
    & \textit{HandPen} & 145.67$\pm$0.00 & 123.81$\pm$0.59 & 137.58$\pm$0.00 & 128.66$\pm$0.00 & 130.09$\pm$0.43 & 118.30$\pm$0.00 \\
    \bottomrule
\end{tabular}
\end{table*}

\subsection{Additional Results on Ablation Study of Noise Decay Period}

Table~\ref{app:tab-period} shows the test performance of the final models for both RRP-SAC and RRP-PPO across all tasks, under both sparse- and dense-reward scenarios.

\begin{table*}[t]
\centering
\setlength{\tabcolsep}{1.2pt}
\small
\caption{Average returns obtained from $100$ tests of the final models in the ablation study of noise decay period.}
\label{app:tab-period}
\begin{tabular}{cccccccc}
    \toprule
    \multirow{2}{*}{Rewards} & \multirow{2}{*}{Tasks} & \multicolumn{3}{c}{RRP-SAC} & \multicolumn{3}{c}{RRP-PPO} \\ \cmidrule(lr){3-8}
    & & $\lambda = 0.1$ & $\lambda = 0.3$ (default) & $\lambda = 0.5$ & $\lambda = 0.1$ & $\lambda = 0.3$ (default) & $\lambda = 0.5$ \\
    \midrule
    \multirow{9}{*}{\makecell{Sparse\\Rewards}}
    & \textit{AntFar} & 107.84$\pm$2.69 & 116.85$\pm$5.32 & 117.96$\pm$0.00 & 100.75$\pm$0.00 & 94.96$\pm$0.00 & 101.03$\pm$0.00 \\ 
    & \textit{HumanStand} & 99.64$\pm$0.00 & 104.34$\pm$0.00 & 102.25$\pm$0.00 & 84.95$\pm$1.62 & 97.05$\pm$0.00 & 80.90$\pm$5.85 \\ 
    & \textit{CheetahFar} & 125.61$\pm$0.00 & 124.01$\pm$4.79 & 127.50$\pm$3.51 & 122.03$\pm$0.00 & 101.35$\pm$1.46 & 103.05$\pm$0.00 \\ 
    & \textit{RobotPush} & 69.35$\pm$0.87 & 75.64$\pm$0.54 & 72.89$\pm$1.61 & 28.16$\pm$0.00 & 40.33$\pm$0.54 & 52.55$\pm$0.00  \\ 
    & \textit{RobotSlide} & 9.27$\pm$0.00 & 5.32$\pm$0.00 & 8.79$\pm$0.00 & 14.45$\pm$0.00 & 17.76$\pm$0.00 & 13.52$\pm$0.00 \\ 
    & \textit{RobotPick} & 44.47$\pm$0.03 & 48.47$\pm$0.59 & 41.96$\pm$0.03 & 37.85$\pm$0.00 & 45.37$\pm$0.00 & 40.47$\pm$0.00 \\ 
    & \textit{HandBlock} & 73.21$\pm$1.12 & 68.33$\pm$1.00 & 67.36$\pm$3.05 & 64.21$\pm$0.00 & 60.32$\pm$0.00 & 61.90$\pm$0.00 \\ 
    & \textit{HandEgg} & 69.89$\pm$0.92 & 74.17$\pm$1.07 & 74.35$\pm$0.00 & 73.14$\pm$0.00 & 79.26$\pm$1.31 & 68.23$\pm$2.61 \\ 
    & \textit{HandPen} & 49.82$\pm$0.00 & 44.41$\pm$0.00 & 43.37$\pm$1.96 & 49.15$\pm$0.09 & 47.03$\pm$1.81 & 43.54$\pm$0.00 \\ 
    \midrule
    \multirow{9}{*}{\makecell{Dense\\Rewards}} 
	& \textit{AntFar} & 563.68$\pm$0.00 & 637.89$\pm$0.00 & 651.50$\pm$0.00 & 616.11$\pm$0.00 & 713.12$\pm$0.00 & 627.68$\pm$0.00 \\ 
    & \textit{HumanStand} & 137.39$\pm$4.02 & 149.72$\pm$4.91 & 143.39$\pm$0.00 & 140.44$\pm$0.00 & 147.42$\pm$0.83 & 139.97$\pm$1.03 \\ 
    & \textit{CheetahFar} & 547.22$\pm$0.00 & 583.34$\pm$0.00 & 611.60$\pm$0.00 & 459.57$\pm$0.00 & 487.31$\pm$0.00 & 459.30$\pm$0.00 \\ 
    & \textit{RobotPush} & 17.58$\pm$0.10 & 18.11$\pm$0.76 & 17.99$\pm$0.08 & 16.11$\pm$0.43 & 17.37$\pm$1.02 & 15.13$\pm$1.22 \\ 
    & \textit{RobotSlide} & 26.46$\pm$0.00 & 22.57$\pm$0.00 & 17.75$\pm$0.00 & 15.17$\pm$0.00 & 26.88$\pm$0.67 & 24.77$\pm$0.00 \\ 
    & \textit{RobotPick} & 11.28$\pm$inf & 19.01$\pm$0.00 & 20.50$\pm$0.47 & 22.49$\pm$0.00 & 21.90$\pm$0.00 & 21.18$\pm$0.00 \\ 
    & \textit{HandBlock} & 84.38$\pm$1.01 & 80.03$\pm$0.00 & 87.53$\pm$1.50 & 80.74$\pm$0.00 & 81.97$\pm$0.00 & 76.18$\pm$0.58 \\ 
    & \textit{HandEgg} & 72.05$\pm$0.00 & 71.56$\pm$0.00 & 52.68$\pm$0.00 & 62.92$\pm$0.00 & 69.20$\pm$0.00 & 69.08$\pm$0.00 \\ 
    & \textit{HandPen} & 123.03$\pm$0.00 & 123.81$\pm$0.59 & 127.28$\pm$0.00 & 104.73$\pm$inf & 130.09$\pm$0.43 & 120.13$\pm$0.00 \\ 	
    \bottomrule
\end{tabular}
\end{table*}

\section{Algorithms Implementation Details}
\label{app:implementation}

This section provides the implementation details for the proposed algorithms, including the hyperparameters and neural network structures used in the experiments. The hyperparameters for RRP-SAC and RRP-PPO are summarized in Table~\ref{app:tab-hyperparameters}. 

\begin{table*}[h!]
\centering
\caption{The hyperparameters of RRP-SAC and RRP-PPO in experiments.}
\label{app:tab-hyperparameters}

\setlength{\tabcolsep}{12pt}
\begin{tabular}{ccc}
    \toprule
    Algorithm & Hyperparameters & Values \\
    \midrule
    \multirow{11}{*}{RRP-SAC} & auto-entropy tuning & True \\
    & discounted factor $\gamma$ & 0.99 \\
    & replay buffer size $|\mathcal{D}|$ & $1 \times 10^6$ \\
    & batch size $B$ & 256 \\
    & policy learning rate & $3 \times 10^{-4}$ \\
    & Q-function learning rate & $1 \times 10^{-3}$ \\
    & entropy coefficient $\alpha$ learning rate & $1 \times 10^{-4}$ \\
    & policy networks update frequency (steps) & $2$ \\
    & target networks update frequency (steps) & $1$ \\
    & target networks soft update weight $\tau$ & $5 \times 10^{-3}$ \\
    & burn-in steps & $5000$ \\
    \midrule
    \multirow{11}{*}{RRP-PPO} & number of parallel environments & 4 \\ 
    & discounted factor $\gamma$ & 0.99 \\
    & generalized advantage estimation (GAE) coefficient & 0.95 \\
    & rollout length & $2048$ \\ 
    & number of mini-batches & 32 \\
    & number of update epochs & 10 \\
    & policy learning rate & $3 \times 10^{-4}$ \\
    & value function learning rate & $3 \times 10^{-4}$ \\
    & annealing learning rate & True \\
    & clip coefficient & 0.2 \\
    & value coefficient & 0.5 \\
    \bottomrule
\end{tabular}
\end{table*}

The neural network architectures used in the policy, value function, and Q-function, are detailed in Figure~\ref{app:fig-networks}.

\begin{figure*}[h!]
    \centering
    \includegraphics[width=0.9\linewidth]{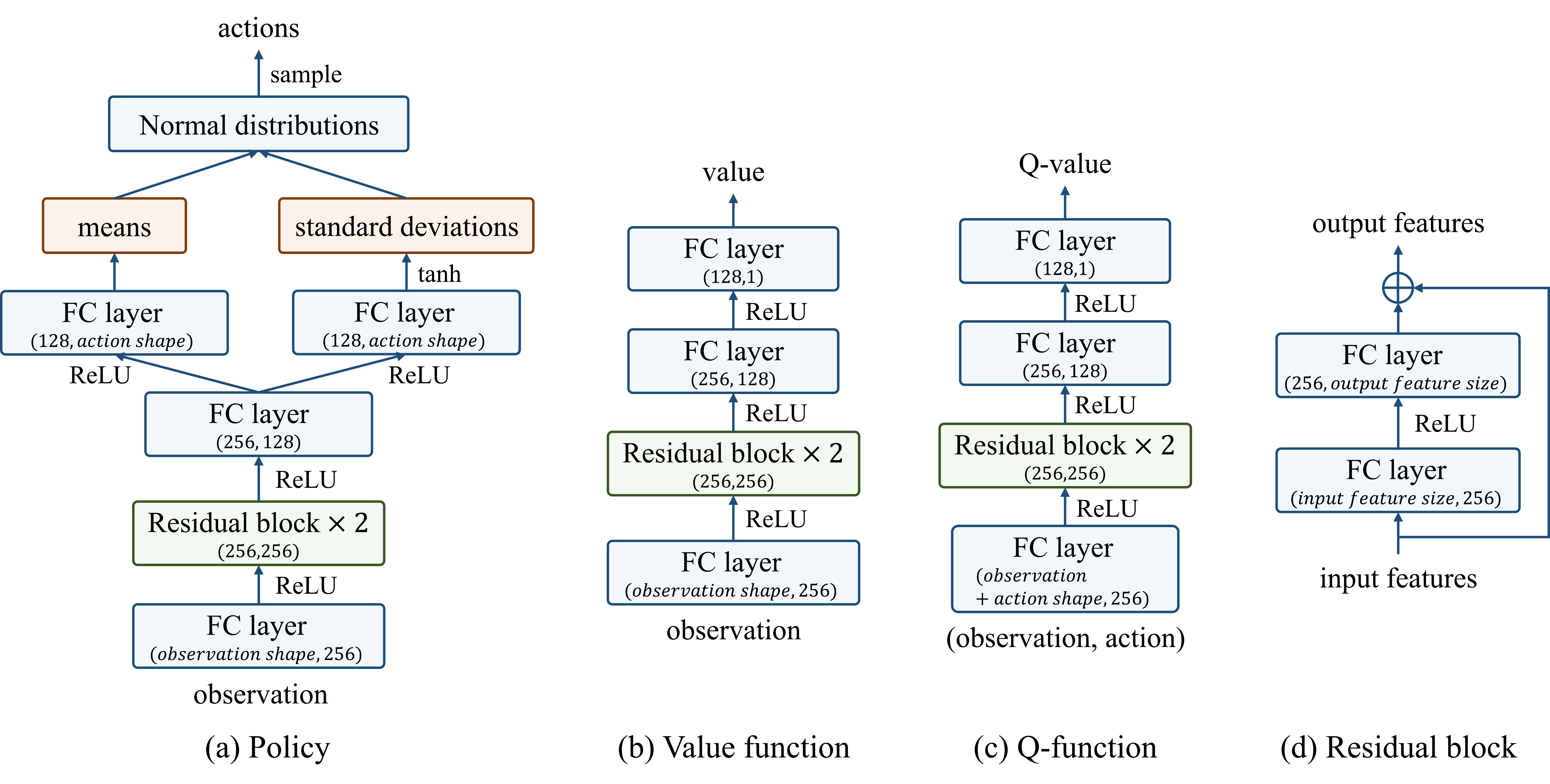}
    \caption{Network structures for the policy, value function, and Q-function in RRP-SAC and RRP-PPO.}
    \label{app:fig-networks}
\end{figure*}

\section{Details of Environments in Experiments}
\label{app:tasks}

The environmental reward structures for the tasks in this paper are detailed in Table~\ref{app:tab-tasks}.

\begin{table}[h!]
\centering
\small
\caption{Environmental reward structures for all tasks, under both sparse and dense reward settings.}
\label{app:tab-tasks}

\begin{tabular}{lp{6.2cm}p{6.2cm}}
    \toprule
    \multirow{2}{*}{Tasks} & \multicolumn{2}{c}{Environmental Reward Structures} \\ \cmidrule(lr){2-3}
	& \makecell[c]{Sparse Rewards} & \makecell[c]{Dense Rewards} \\ 
    \midrule
	\textit{AntFar} & The quadruped robot receives a reward of $1$ if it moves beyond a threshold distance of $5$ units in the x-axis direction. Otherwise, recieves $0$. 
    & The reward is proportional to the robot's displacement projected onto the x-axis, encouraging forward movement. \\ \hline
    \textit{HumanStand} & The humanoid robot earns a reward of $1$ when its center of mass exceeds a height of $0.5$ units, representing a standing pose. Otherwise, the reward is $0$. 
    & The reward directly corresponds to the height of the robot's center of mass, providing continuous feedback during the transition from lying down to standing. \\ \hline
    \textit{CheetahFar} & The half-cheetah robot is rewarded with $1$ when it surpasses a threshold of $5$ units in the x-axis direction. Rewards remain $0$ if the threshold is not crossed. 
    & The reward is proportional to the forward distance traveled by the half-cheetah robot along the x-axis, promoting forward locomotion. \\ \hline
    \textit{RobotPush} & The robotic arm receives a reward of $1$ if the block is moved to within $0.05$ units of the target position. Otherwise, the reward is $0$. 
    & The reward is calculated as $-d + d_0$, where $d$ is the current distance between the block and the target position, and $d_0$ is the initial distance, encouraging closer positioning of the block. \\ \hline
    \textit{RobotSlide} & The robotic arm earns a reward of $1$ for sliding the block to within $0.05$ units of the target location on a frictionless surface. Otherwise, the reward remains $0$. 
    & The reward is similarly defined as $-d + d_0$, but the low-friction surface enables the block to slide more easily with smaller forces. \\ \hline
    \textit{RobotPick} & The robotic arm is rewarded with $1$ for successfully picking up and placing the block within $0.05$ units of the target position. If unsuccessful, the reward is $0$. 
    & The reward follows the formula $-d + d_0$, where $d$ is the distance between the block and the target. However, the task requires the arm to first lift the block and then move it to the desired position. \\ \hline
    \textit{HandBlock} & The robotic hand receives a reward of $1$ if the block's rotation around the z-axis is within $0.1$ radians and the Euclidean distance to the target position is below $0.01$ units. Otherwise, the reward is $0$. 
    & The reward is calculated as the negative sum of the Euclidean distance to the target position and the angular difference from the target orientation, with an initial offset added for normalization. \\ \hline
    \textit{HandEgg} & The robotic hand earns a reward of $1$ for successfully rotating an egg-shaped object around its z-axis to the target orientation, with a positional error below $0.01$ units and an angular error under $0.1$ radians. Otherwise, the reward remains $0$. 
    & The reward is determined by the negative sum of the Euclidean distance to the target position and the angular discrepancy from the target orientation, similar to the \textit{HandBlock} task but applied to an egg-shaped object. \\ \hline
    \textit{HandPen} & The robotic hand is rewarded with $1$ when it rotates a pen-shaped object to the target orientation, with the position and orientation errors meeting the same thresholds as in \textit{HandBlock}. Otherwise, the reward is $0$. 
    & The reward function mirrors that of the \textit{HandBlock} task, combining the negative Euclidean distance to the target and the angular discrepancy, but is applied to a pen-shaped object. \\ 
    \bottomrule
\end{tabular}
\end{table}

\end{document}